\def\P{\mathbb P}
\def\b{\boldsymbol{\beta}}
\def\t{\boldsymbol{\theta}}
\DeclareMathOperator*{\argmin}{argmin}
\DeclareMathOperator*{\diag}{diag}
\DeclareMathOperator*{\rank}{rank}
\DeclareMathOperator*{\spa}{span}
\DeclareMathOperator*{\conv}{Conv}
\newsavebox\myboxA
\newsavebox\myboxB
\newlength\mylenA
\newcommand*\xoverline[2][0.75]{%
    \sbox{\myboxA}{$\m@th#2$}%
    \setbox\myboxB\null% Phantom box
    \ht\myboxB=\ht\myboxA%
    \dp\myboxB=\dp\myboxA%
    \wd\myboxB=#1\wd\myboxA% Scale phantom
    \sbox\myboxB{$\m@th\overline{\copy\myboxB}$}%  Overlined phantom
    \setlength\mylenA{\the\wd\myboxA}%   calc width diff
    \addtolength\mylenA{-\the\wd\myboxB}%
    \ifdim\wd\myboxB<\wd\myboxA%
       \rlap{\hskip 0.5\mylenA\usebox\myboxB}{\usebox\myboxA}%
    \else
        \hskip -0.5\mylenA\rlap{\usebox\myboxA}{\hskip 0.5\mylenA\usebox\myboxB}%
    \fi}
\def\W{\xoverline{W}}
\newcommand{\tr}{\operatorname{tr}}
\title{Geometric Dirichlet Means algorithm \\
for topic inference}
\author{
  Mikhail Yurochkin \\
  Department of Statistics\\
  University of Michigan\\
  \texttt{moonfolk@umich.edu} \\
  \And
  XuanLong Nguyen \\
  Department of Statistics\\
  University of Michigan\\
  \texttt{xuanlong@umich.edu} \\
  %% \AND
  %% Coauthor \\
  %% Affiliation \\
  %% Address \\
  %% \texttt{email} \\
  %% \And
  %% Coauthor \\
  %% Affiliation \\
  %% Address \\
  %% \texttt{email} \\
  %% \And
  %% Coauthor \\
  %% Affiliation \\
  %% Address \\
  %% \texttt{email} \\
}
\begin{document}

% \nipsfinalcopy is no longer used
\theoremstyle{definition}
\newtheorem{defn}{Definition}
\newtheorem{thm}{Theorem}
\newtheorem{prop}{Proposition}
\newtheorem{cor}{Corollary}
\newtheorem{problem}{Problem}
\newtheorem{lem}{Lemma}
\newtheoremstyle{TheoremNum}
    {\topsep}{\topsep}              %%% space between body and thm
    {\itshape}                      %%% Thm body font
    {}                              %%% Indent amount (empty = no indent)
    {\bfseries}                     %%% Thm head font
    {.}                             %%% Punctuation after thm head
    { }                             %%% Space after thm head
    {\thmname{#1}\thmnote{ \bfseries #3}}%%% Thm head spec
\theoremstyle{TheoremNum}
\newtheorem{pmn}{Problem}
\newtheorem{propn}{Proposition}

\maketitle

\begin{abstract}
We propose a geometric algorithm for topic learning and inference
that is built on the convex geometry of topics arising from the
Latent Dirichlet Allocation (LDA) model and its nonparametric extensions.
To this end we study the optimization of a geometric loss function,
which is a surrogate to the LDA's likelihood. Our method involves
a fast optimization based weighted clustering procedure augmented
with geometric corrections, which overcomes the computational and statistical inefficiencies encountered by
other techniques based on Gibbs sampling and variational inference, while achieving
the accuracy comparable to that of a Gibbs sampler.
The topic estimates produced by our method
are shown to be statistically consistent under some conditions.
The algorithm is evaluated with
extensive experiments on simulated and real data.
\end{abstract}

\section{Introduction}
Most learning and inference algorithms in the probabilistic
topic modeling literature can be delineated along two major lines:
the variational approximation popularized in the seminal paper
of~\citet{blei2003latent}, and the sampling based
approach studied by~\citet{pritchard2000inference} and other authors.
Both classes of inference algorithms, their
virtues notwithstanding, are known to exhibit certain deficiencies,
which can be traced back to the need for approximating or sampling from the
posterior distributions of the latent variables representing the
topic labels. Since these latent variables are not geometrically
intrinsic --- any permutation of the labels yields
the same likelihood --- the manipulation of these
redundant quantities tend to slow down the computation, and compromise
with the learning accuracy.

In this paper we take a convex
geometric perspective of the Latent Dirichlet Allocation, which
may be obtained by integrating out the latent topic label variables. As a result,
topic learning and inference may be formulated as a
convex geometric problem: the observed documents correspond to points
randomly drawn from a \emph{topic polytope}, a convex set whose vertices
represent the topics to be inferred. The original paper of~\citet{blei2003latent}
(see also \citet{hofmann1999probabilistic}) contains early hints about
a convex geometric viewpoint, which is left unexplored.
This viewpoint had laid dormant for quite some time, until
studied in depth in the work of Nguyen and co-workers, who
investigated posterior contraction behaviors for the LDA both
theoretically and practically~\citep{nguyen2015posterior,tang2014understanding}.

Another fruitful perspective on topic modeling can be obtained by partially
stripping away the distributional properties of the probabilistic model and
turning the estimation problem into a form of matrix factorization
\citep{deerwester1990indexing,xu2003document,anandkumar2012spectral,arora2012practical}.
We call this the linear subspace viewpoint. For instance,
the Latent Semantic Analysis approach \citep{deerwester1990indexing}, which can
be viewed as a precursor of the LDA model, looks to find a latent subspace via
singular-value decomposition, but has no topic structure.
%\begin{comment}
%Although \citet{cai2008modeling} adopted a geometric perspective of the pLSI, the
%generalized EM algorithm that they ultimately applied does not fully reflect
%the model's underlying geometry. \citet{ding2013new} explored the
%connection of the linear subspace perspective of topic modeling with that of
%clustering in their algorithm.
%\end{comment}
Notably, the RecoverKL by \citet{arora2012practical} is one of the recent
fast algorithms with provable guarantees coming from the linear subspace perspective.
% and we will show in simulations that our analysis leads to a more efficient, but also fast algorithm.

The geometric perspective continues to be the main force driving this work.
We develop and analyze a new class of algorithms for topic inference,
which exploits both the convex geometry of topic models and the distributional properties
they carry. The main contributions in this work are the following:
(i) we investigate a geometric loss function to be optimized,
which can be viewed as a surrogate to the LDA's likelihood;
this leads to a novel
estimation and inference algorithm --- the Geometric Dirichlet Means algorithm,
which builds upon a weighted k-means clustering procedure
and is augmented with a
geometric correction for obtaining polytope estimates; (ii)
we prove that the GDM algorithm is consistent, under conditions on the Dirichlet
distribution and the geometry of the topic polytope;
(iii) we propose a nonparametric extension of GDM and discuss geometric treatments
for some of the LDA extensions; (v) finally we provide a thorough evaluation
of our method against a Gibbs sampler, a variational algorithm, and the
RecoverKL algorithm. Our method is shown to be
comparable to a Gibbs sampler in terms of estimation accuracy, but
much more efficient in runtime. It outperforms RecoverKL algorithm
in terms of accuracy,
in some realistic settings of simulations and in real data.

The paper proceeds as follows.
Section \ref{bg} provides a brief background of the LDA and its convex geometric
formulation. Section \ref{geo} carries out the contributions outlined above.
Section \ref{pfm} presents experiments results.
We conclude with a discussion in Section \ref{dis}.

\section{Background on topic models}
\label{bg}
In this section we give an overview of the well-known Latent Dirichlet
Allocation model for topic modeling~\citep{blei2003latent}, and the geometry
it entails. Let $\alpha\in \mathbb{R}_+^K$ and $\eta\in \mathbb{R}_+^V$ be
hyperparameters, where $V$ denotes the number of words in a vocabulary, and $K$ the number of
topics. The $K$ topics are represented as distributions on words:
$\beta_k|\eta\,\thicksim\, \text{Dir}_V(\eta)$, for $k=1,\ldots, K$.
Each of the $M$ documents can be generated as follows. First, draw the document topic proportions:
$\theta_m|\alpha\,\thicksim\,\text{Dir}_K(\alpha)$, for $m=1,\ldots, M$. Next,
for each of the $N_m$ words in document $m$, pick a topic label $z$ and then sample a word $d$ from the chosen topic:
\begin{eqnarray}
z_{n_m}|\theta_m\,& \thicksim & \,\text{Categorical}(\theta_m);\,\,d_{n_m}|z_{n_m},\beta_{1\ldots K} \, \thicksim \,\text{Categorical}(\beta_{z_{n_m}}).
\end{eqnarray}
Each of the resulting documents is a vector of length $N_m$ with entries $d_{n_m}\in\{1,\ldots,V\}$, where $n_m=1,\ldots,N_m$.
Because these words are exchangeable by the modeling, they are equivalently represented as a vector of
word counts $w_m \in \mathbb{N}^V$. In practice, the Dirichlet distributions are often simplified
to be symmetric Dirichlet, in which case hyperparameters $\alpha, \eta\in \mathbb{R}_+$ and
we will proceed with this setting.
Two most common approaches for inference with the LDA are Gibbs sampling \citep{griffiths2004finding},
based on the Multinomial-Dirichlet conjugacy, and mean-field inference \citep{blei2003latent}. The former
approach produces more accurate estimates but is less computationally efficient than the latter.
The inefficiency of both techniques can be traced to the need for sampling or estimating the
(redundant) topic labels. These labels are not intrinsic --- any permutation of the
topic labels yield the same likelihood function.
%The challenge we would like to address is how to avoid sampling or estimating topic labels of every word.
\paragraph{Convex geometry of topics.}
By integrating out the latent variables that represent the topic labels,
we obtain a geometric formulation of the LDA.
Indeed, integrating $z$'s out yields that, for $m=1,\ldots, M$,
\[w_m|\theta_m,\beta_{1\ldots K}, N_m\,\thicksim\,\text{Multinomial}(p_{m1},\ldots,p_{mV},N_m),\]
where $p_{mi}$ denotes probability of observing the $i$-th word from the vocabulary in
the $m$-th document, and is given by
\begin{equation}
\label{pmi}
p_{mi} = \sum_{k=1}^K \theta_{mk}\beta_{ki}\text{ for }i=1,\ldots,V;\,m=1,\ldots,M.
\end{equation}
The model's geometry becomes clear.
Each topic is represented by a point $\beta_k$ lying in the $V-1$ dimensional probability
simplex $\Delta^{V-1}$.
Let $B:=\conv(\beta_1,\ldots,\beta_K)$ be the convex hull of the $K$ topics $\beta_k$, then
each document corresponds to a point $p_{m}:= (p_{m1},\ldots,p_{mV})$ lying inside the polytope $B$.
This point of view has been proposed before \citep{hofmann1999probabilistic}, although topic proportions $\theta$ were not given any geometric meaning. The following treatment of $\theta$ lets us relate to the LDA's Dirichlet prior assumption and complete the geometric perspective of the problem.
The Dirichlet distribution generates probability vectors $\theta_m$, which can be viewed
as the (random) \emph{barycentric coordinates} of the document $m$ with respect to the polytope $B$.
Each $p_m=\sum_k \theta_{mk} \beta_k$ is a vector of cartesian coordinates of the $m$-th document's
multinomial probabilities. Given $p_m$, document $m$ is generated by taking
$w_m\,\sim\,\text{Multinomial}(p_m, N_m)$. In Section \ref{pfm} we will show how this
interpretation of topic proportions can be utilized by other topic modeling approaches,
including for example the RecoverKL algorithm of \citet{arora2012practical}.
%This geometric viewpoint was exploited
%by~\cite{nguyen2015posterior} in the analysis of the posterior contraction behavior in the LDA.
In the following the model geometry is exploited to derive fast and effective
geometric algorithm for inference and parameter estimation.

\section{Geometric inference of topics}
\label{geo}

We shall introduce a geometric loss function that can be viewed as a surrogate to
the LDA's likelihood. To begin, let $\b$ denote
the $K\times V$ topic matrix with rows $\beta_k$,
$\t$ be a $M\times K$ document topic proportions matrix with rows $\theta_m$,
and $\W$ be $M\times V$ normalized word counts matrix with rows $\bar{w}_m = w_m/N_m$.
\subsection{Geometric surrogate loss to the likelihood}
Unlike the original LDA formulation, here
the Dirichlet distribution on $\t$ can be viewed as a prior on parameters
$\t$.
The log-likelihood of the observed corpora of $M$ documents is
\[L(\t, \b) = \sum_{m=1}^M\sum_{i=1}^V w_{mi}\log{\left(\sum_{k=1}^K\theta_{mk}\beta_{ki}\right)},\]
where the parameters $\b$ and $\t$ are subject to constraints
$\sum_i \beta_{ki} = 1$ for each $k=1,\ldots,K$, and
$\sum_k \theta_{mk} = 1$ for each $m=1,\ldots,M$.
Partially relaxing these constraints and keeping only the one that
the sum of all entries for each row of the matrix product $\t \b$ is 1,
yields the upper bound that $L(\t, \b) \leq L(\W)$, where function
$L(\W)$ is given by
\[L(\W) = \sum_m \sum_i w_{mi}\log \bar w_{mi}.\]
We can establish a tighter bound, which will prove useful
(the proof of this and other technical results are in the \hyperref[supp]{Supplement}):
\begin{prop}\label{prop1}
Given a fixed topic polytope $B$ and $\t$. Let
$U_m$ be the set of words present in document $m$, and assume that
$p_{mi}>0$ $\forall$ $i\in U_m$, then
\begin{eqnarray*}
L(\W) - \frac{1}{2}\sum_{m=1}^{M} N_m\sum_{i\in U_m}(\bar{w}_{mi} - p_{mi})^2 \geq L(\t, \b) \geq L(\W) - \sum_{m=1}^{M} N_m \sum_{i\in U_m}\frac{1}{p_{mi}}(\bar w_{mi} - p_{mi})^2.
%&& \geq L(\W) - \sum_{m=1}^{M} N_m \biggr (\sum_{i\in U_m}\frac{V(\bar w_{mi} - p_{mi})^2}
%{(\bar w_{mi}\wedge p_{mi})^2}\biggr)^{1/2}.
\end{eqnarray*}
\end{prop}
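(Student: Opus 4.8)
The plan is to recognize the gap $L(\W)-L(\t,\b)$ as a weighted sum of Kullback--Leibler divergences and then sandwich each divergence between the two quadratic forms in the statement. First I would write $w_{mi}=N_m\bar w_{mi}$ and note that for $i\notin U_m$ we have $w_{mi}=\bar w_{mi}=0$, so every sum effectively runs over $U_m$. Subtracting the two log-likelihoods then gives
\[
L(\W)-L(\t,\b)=\sum_{m=1}^M N_m\sum_{i\in U_m}\bar w_{mi}\log\frac{\bar w_{mi}}{p_{mi}}=\sum_{m=1}^M N_m\,D(\bar w_m\,\|\,p_m),
\]
where each empirical distribution $\bar w_m$ is a probability vector supported on $U_m$ (so $\sum_{i\in U_m}\bar w_{mi}=1$) and $p_m$ is a probability vector with $p_{mi}>0$ on $U_m$ by assumption. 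The proposition is thus equivalent to the two-sided bound $\tfrac12\sum_{i\in U_m}(\bar w_{mi}-p_{mi})^2\le D(\bar w_m\,\|\,p_m)\le\sum_{i\in U_m}\frac{(\bar w_{mi}-p_{mi})^2}{p_{mi}}$, applied term by term and summed against the nonnegative weights $N_m$.

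For the left inequality (the lower bound on the divergence) I would invoke Pinsker's inequality, $D(\bar w_m\,\|\,p_m)\ge\tfrac12\|\bar w_m-p_m\|_1^2$, followed by the norm comparison $\|x\|_1\ge\|x\|_2$, which yields $D(\bar w_m\,\|\,p_m)\ge\tfrac12\sum_{i=1}^V(\bar w_{mi}-p_{mi})^2\ge\tfrac12\sum_{i\in U_m}(\bar w_{mi}-p_{mi})^2$. Here the restriction to $U_m$ only discards nonnegative terms, so it is harmless and the stated factor $\tfrac12$ emerges exactly.

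For the right inequality (the upper bound on the divergence) the natural tool is the elementary estimate $\log t\le t-1$ applied to $t=\bar w_{mi}/p_{mi}$, giving the per-coordinate bound $\bar w_{mi}\log\frac{\bar w_{mi}}{p_{mi}}\le\frac{\bar w_{mi}(\bar w_{mi}-p_{mi})}{p_{mi}}=\frac{(\bar w_{mi}-p_{mi})^2}{p_{mi}}+(\bar w_{mi}-p_{mi})$. Summing over the full vocabulary makes the linear remainder $\sum_i(\bar w_{mi}-p_{mi})=1-1=0$ vanish, producing the clean $\chi^2$-bound $D(\bar w_m\,\|\,p_m)\le\sum_{i=1}^V\frac{(\bar w_{mi}-p_{mi})^2}{p_{mi}}$.

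The delicate point, and what I expect to be the main obstacle, is the restriction of this last bound to $U_m$. Summing the per-coordinate inequality only over $i\in U_m$ leaves the leftover mass $\sum_{i\in U_m}(\bar w_{mi}-p_{mi})=1-\sum_{i\in U_m}p_{mi}\ge0$, which has the \emph{wrong} sign to simply discard. Matching the statement therefore requires either retaining the full-vocabulary $\chi^2$ sum---so that the unobserved-word contribution $\sum_{i\notin U_m}\frac{p_{mi}^2}{p_{mi}}=\sum_{i\notin U_m}p_{mi}$ exactly absorbs this leftover mass---or an explicit accounting of that mass. I would make this bookkeeping precise, since it is exactly where restricting to the observed words $U_m$ interacts nontrivially with the direction of the inequality, and is the one place where a naive term-by-term argument does not immediately close.
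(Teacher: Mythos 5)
Your decomposition and both bounding tools are exactly the paper's: it writes $L(\W)-L(\t,\b)=\sum_m N_m D(P_{\bar w_m}\|P_{p_m})$ and then quotes a Pinsker-type lower bound (from Cover and Thomas) and a $\chi^2$-type upper bound (from Sayyareh, 2011) for each divergence. Your proof of the left inequality --- Pinsker, then $\|\cdot\|_1\ge\|\cdot\|_2$, then discarding the nonnegative squared terms outside $U_m$ --- is complete and matches what the paper's citation amounts to.

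The delicate point you flag on the right inequality is not a failure of your bookkeeping; it is a genuine gap in the paper's own proof. The paper states the $\chi^2$ upper bound with the sum restricted to $U_m$, but the inequality it cites is $D(P\|Q)\le\sum_i(P_i-Q_i)^2/Q_i$ with the sum over the full support of $Q$, and the restricted version is simply false. Take $V=2$, $K=2$, $\beta_1=(1,0)$, $\beta_2=(0,1)$, $\theta_m=(0.6,0.4)$, so $p_m=(0.6,0.4)$, and let document $m$ consist of $N_m$ copies of word 1, so $U_m=\{1\}$ and $\bar w_m=(1,0)$. Then $D(P_{\bar w_m}\|P_{p_m})=\log(1/0.6)\approx 0.51$, while $\sum_{i\in U_m}(\bar w_{mi}-p_{mi})^2/p_{mi}=(0.4)^2/0.6\approx 0.27$, so the proposition's right-hand inequality fails for this document (and hence for a corpus of such documents). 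Your $\log t\le t-1$ computation gives the correct, weaker statement $D(P_{\bar w_m}\|P_{p_m})\le\sum_{i\in U_m}(\bar w_{mi}-p_{mi})^2/p_{mi}+\sum_{i\notin U_m}p_{mi}$, the extra term being exactly the leftover mass you identified ($1-p=0.4$ above, which restores the inequality). So the proposition and its proof need repair --- either sum the $\chi^2$ term over the full support of $p_m$, or carry the additional term $\sum_m N_m\sum_{i\notin U_m}p_{mi}$ --- and your instinct to account for that mass explicitly rather than discard it is the correct one.
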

%\begin{proof}
%Proof is provided in the supplementary material
%\end{proof}
Since $L(\W)$ is constant,
the proposition above shows that maximizing the likelihood has the effect
of minimizing the following quantity with respect to both $\t$ and $\b$:
\[\sum_m N_m\sum_{i}(\bar{w}_{mi} - p_{mi})^2.\]

For each fixed $\b$ (and thus $B$), minimizing first with respect
to $\t$ leads to the following
\begin{eqnarray}
\label{theta-solve}
G(B) & := & \min\limits_{\t}\sum_m N_m\sum_{i}(\bar{w}_{mi} - p_{mi})^2 = \sum_{m=1}^M N_m\min\limits_{x:x\in B}\|x-\bar{w}_m\|_2^2,
\end{eqnarray}
where the second equality in the above display is due
$p_m=\sum_k \theta_{mk} \beta_k \in B$.
%\begin{comment}
%The role of this loss becomes clear thanks to:
%\begin{cor}\label{cor1}
%Assume that $U_m=V$ for all $m$. For a fixed $\b$ (and hence $B$),
%\begin{align*}
%L(\W) - \frac{1}{2} G(B) \geq \max\limits_{\t} L(\t, \b)
%\geq L(\W) - \frac{C}{2} G(B),
%\end{align*}
%where $C = \max\limits_m\frac{1}{\min\limits_{i\in U_m} \bar w_{mi}}.$
%\end{cor}
%\begin{proof}
%For any $B$:
%\begin{equation}
%\label{theta-solve}
%\min\limits_{\t}\sum_m N_m\sum_{i\in V}(\bar{w}_{mi} - p_{mi})^2 = G(B).
%\end{equation}
%Taking maximum with respect to $\t$ of the
%inequality of Prop. \ref{prop1} completes the proof.
%\end{proof}
%\end{comment}
%\begin{comment}
%\begin{align*}
%L(\W) - \frac{1}{2}\sum_m N_m\sum_{i\in U_m}(\bar{w}_{mi} - p_{mi})^2 \geq L(\t, \b) \geq \\
%\geq L(\W) - \frac{1}{2}\sum_m N_m C_m\sum_{i\in U_m}(\bar w_{mi} - p_{mi})^2.
%\end{align*}
%Where $C_m = \frac{1}{\min\limits_{i\in U_m} \bar w_{mi}}$
%\end{comment}
The proposition suggests a strategy for parameter estimation:
$\b$ (and $B$) can be estimated by minimizing the geometric loss function $G$:
\begin{equation}
\label{geometric-min}
\min_{B} G(B) = \min_{B} \sum_{m=1}^M N_m\min\limits_{x:x\in B}\|x-\bar{w}_m\|_2^2.
\end{equation}
In words, we aim to find a convex polytope $B\in \Delta^{V-1}$, which is
closest to the normalized word counts $\bar w_m$ of the observed documents. It is interesting
to note the presence of document length $N_m$, which provides the weight for
the squared $\ell_2$ error for each document. Thus, our loss function adapts
to the varying length of documents in the collection.
Without the weights, our objective is similar to the sum of squared errors of the Nonnegative Matrix Factorization(NMF). %\citet{gaussier2005relation} and
\citet{ding2006nonnegative} studied the relation between the likelihood function of interest and NMF, but with a different objective of the NMF problem and without geometric considerations.
Once $\hat{B}$ is solved, $\hat{\t}$ can be obtained as the
barycentric coordinates of the projection of $\bar{w}_m$ onto $\hat{B}$ for each document
$m=1,\ldots,M$ (cf. Eq~\eqref{theta-solve}).
We note that if $K \leq V$, then $B$ is a simplex and $\beta_1,\ldots,\beta_k$
in general positions are the extreme points of $B$, and the barycentric coordinates are unique.
(If $K>V$, the uniqueness no longer holds).
%\begin{comment}
%Our objective is $$G(B) = \sum_{m=1}^M N_m\min\limits_{x\in B}\|x-\bar{w}_m\|_2^2.$$
%Topics are given by vertices of $B=\conv(\beta_1,\ldots,\beta_K)$ and document topic proportions $\theta_m$ are given by barycentric coordinates of the projection of $\bar{w}_m$ onto $B$ for each document. Since $B$ is a simplex, barycentric coordinates are unique. Minimum of the geometric objective $\min\limits_{B}G(B) = 0$ is attained if and only if $\exists \, \b,\t:L(\t,\b) = L(\W)$. Recall that $p_m=\sum_k \theta_{mk} \beta_k$, therefore $L(\t, \b) = \sum_{m=1}^M\sum_{i=1}^V w_{mi}\log p_{mi}$. Following proposition relates geometric objective and log-likelihood:
%%
%Since $L(\W)$ is a constant, we see that to maximize lower bound of the log-likelihood for fixed $B$ we should take $p_m$ as weighted projection of $\bar w_m$ onto $B$ for each document $m$. Nevertheless, our main goal is to estimate $B$, for what we want to use all available documents, so the overall weights structure becomes complicated. We consider the following relaxation of Proposition \ref{prop1}:
%%
%Now we can clearly see the connection between log-likelihood and geometric objective: minimizing $G(B)$ also maximizes lower and upper bounds of the maximized log-likelihood, where as $G(B)\rightarrow 0$, $L(\t, \b)\rightarrow L(\W)$.
%Consequently, for any given $B$ we consider taking each $\theta_m$ as the barycentric coordinates of the projection of corresponding $\bar{w}_m$ onto $B$.
%\end{comment}
Finally, $\hat p_m = \hat{\theta}_m^T\hat{\b}$ gives the cartesian coordinates of a point
in $B$ that minimizes Euclidean distance to the maximum likelihood estimate:
$\hat{p}_m=\argmin\limits_{x\in B}\|x-\bar{w}_m\|_2$.
This projection is not available in the closed form, but a fast algorithm is available
\citep{golubitsky2012algorithm}, which
can easily be extended to find the corresponding distance and to evaluate our
geometric objective.
%
%$$\hat B = \argmin\limits_{B}\sum_{m=1}^M N_m \min\limits_{x\in B}\|x-\bar{w}_m\|_2^2.$$
%\begin{comment}
%Our geometric approach seeks to find topic polytope $B$ that would allow for small
%likelihood loss comparatively to taking $\hat{p}_m$ as the MLE for each document
%(no topic structure in this case).
%Another interesting consequence is that we can easily estimate topic proportions for any seen or unseen documents by using the projection approach. This could be used by any topic modeling algorithm that outputs topics, as there are some not capable of finding document topic proportions at all, while Gibbs sampling and VEM have to do extra iterations to find topic proportions of the held-out documents.
%\end{comment}

\subsection{Geometric Dirichlet Means algorithm}
We proceed to devise a procedure for approximately solving the topic polytope $B$ via
Eq.~\eqref{geometric-min}: first, obtain an estimate of the underlying
subspace based on weighted k-means clustering and then, estimate the
vertices of the polytope
that lie on the subspace just obtained via a geometric correction technique. Please refer to the \hyperref[supp]{Supplement} for a clarification of the concrete connection between our geometric loss
function and other objectives which arise in subspace learning and
weighted k-means clustering literature,
the connection that motivates the first step of our algorithm.

\paragraph{Geometric Dirichlet Means (GDM) algorithm}
estimates a topic polytope  $B$ based on the training documents
(see Algorithm \ref{algo}).
%projection approach to obtain document topic proportions for both training and testing documents if needed%Our procedure avoids costly and unnecessary estimation/sampling of individual word to topic assignments and has a good speed performance. There are two important points to consider: how to choose extension parameters and the intuition behind the thresholding procedure, which are indeed closely related.\\
The algorithm is conceptually simple, and consists of two main steps:
First, we perform a (weighted) k-means clustering on
the $M$ points $\bar w_1,\ldots, \bar w_M$ to obtain the $K$ centroids
$\mu_1,\ldots, \mu_K$, and second, construct a ray emanating from a (weighted) center of the polytope
and extending through each of the centroids $\mu_k$ until it intersects with a
sphere of radius $R_k$ or with the simplex $\Delta^{V-1}$ (whichever comes first).
The intersection point will be our estimate for vertices $\beta_k$, $k=1,\ldots,K$
of the polytope $B$.  The center $C$ of the sphere is given in step 1 of the algorithm,
while
$R_k = \max\limits_{1 \leq m \leq M} \|C- \bar{w}_m\|_2$, where
the maximum is taken over those documents $m$ that are clustered with label $k$.
%is the radius of the sphere which contains all normalized word counts.
%A geometrically intuitive way of finding symmetric Dirichlet extreme points is to consider starting a line at $\boldsymbol C$, connecting it with $\boldsymbol{\mu_k}$ and extending until either its length is $R$ or it intersects the vocabulary simplex.
\begin{algorithm}[ht]
\caption{Geometric Dirichlet Means (GDM)}
\label{algo}
\begin{algorithmic}[1]
\REQUIRE documents $w_1,\ldots,w_M$, $K$, \\ extension scalar parameters $m_1,\ldots,m_K$
\ENSURE topics $\beta_1,\ldots,\beta_K$
\STATE $C = \frac{1}{M}\sum_m \bar w_m$ \COMMENT{find center of the data}
\STATE $\mu_1,\ldots,\mu_K$ = weighted k-means$(\bar{w}_1,\ldots,{\bar w}_M, K)$ \COMMENT{find centers of $K$ clusters}.
%\STATE $R = \max\limits_{1 \leq m \leq M} \|C-w_m\|_2$ \COMMENT{all documents are in sphere with radius $R$}
\FORALL{$k=1,\ldots,K$}
\STATE $\beta_k = C+m_k\left(\mu_k-C\right)$.
\IF[threshold topic if it is outside vocabulary simplex $\Delta^{V-1}$]{any $\beta_{ki} <0$}
\FORALL{$i=1,\ldots,V$}
\STATE $\beta_{ki} = \frac{\beta_{ik}\mathds{1}_{\beta_{ki}>0}}{\sum_i\beta_{ki}\mathds{1}_{\beta_{ki}>0}}$.
\ENDFOR
\ENDIF
\ENDFOR
\STATE $\beta_1,\ldots,\beta_K$.
\end{algorithmic}
\end{algorithm}
To see the intuition behind the algorithm, let us consider a simple simulation experiment.
We use the LDA data generative model with $\alpha=0.1$, $\eta=0.1$, $V=5$, $K=4$, $M=5000$, $N_m=100$.
\begin{figure}[h]
\begin{center}
\centerline{\includegraphics[width=0.5\textwidth, height=150pt]{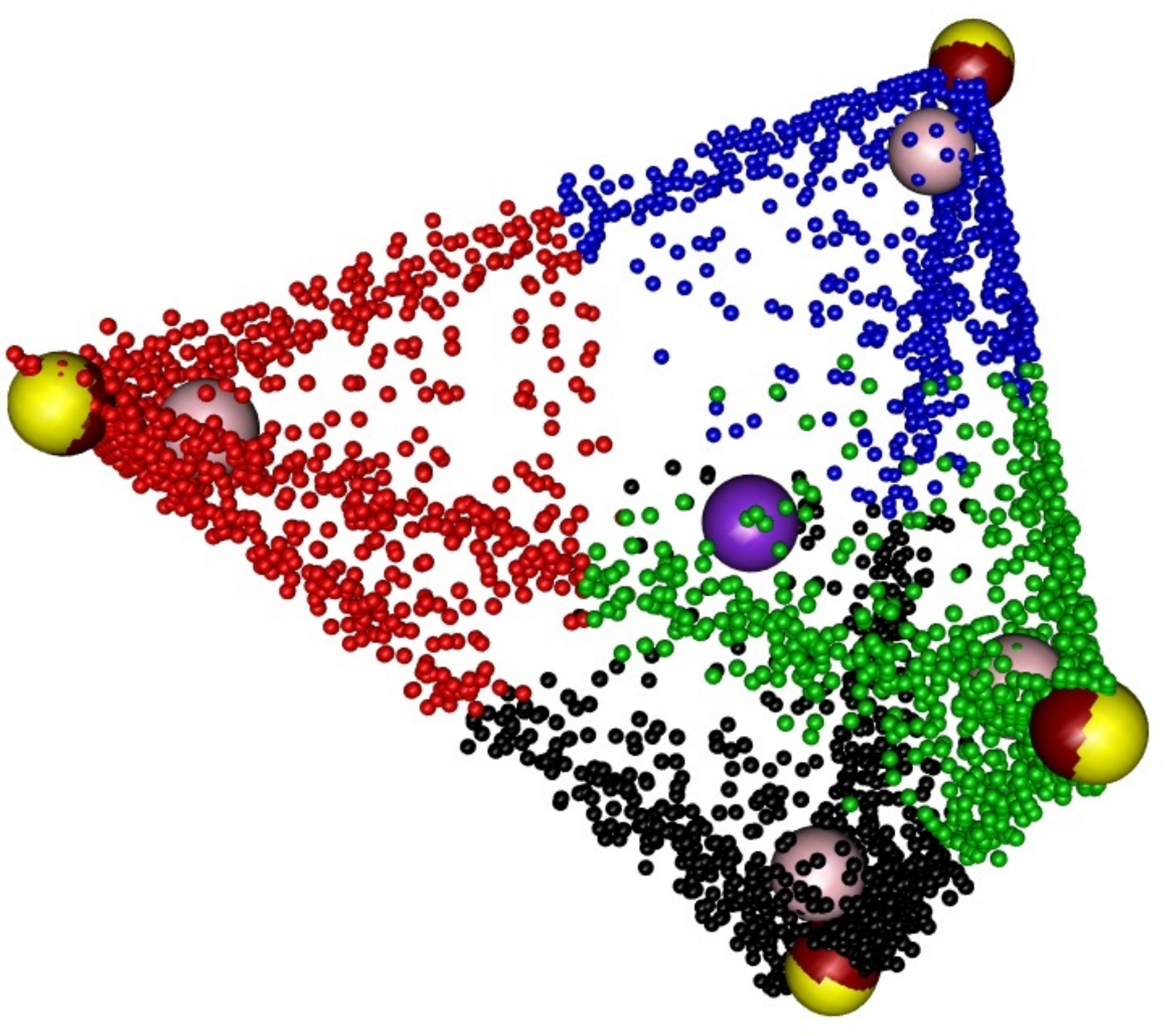}}
%\centerline{\includegraphics[width=0.5\textwidth]{mds_3d.jpg}}
\caption{Visualization of GDM: Black, green, red and blue are cluster assignments; purple is the center, pink are cluster centroids, dark red are estimated topics and yellow are the true topics.}
\label{fig:mds}
\end{center}
\vskip -0.2in
\end{figure}
\begin{comment}
%In practice we can not always expect topic polytope (which is a simplex in this case)
%to be equilateral and we only have access to the estimates of the true $p_m$'s. ,
%while we let algorithm to choose $m_k$ based on the value of the geometric objective to adjust
%for the data uncertainty
%\end{comment}
Multidimensional scaling is used for visualization (Fig. \ref{fig:mds}).
We observe that the k-means centroids (pink) do not represent the topics very well,
but our geometric modification finds extreme points of the tetrahedron:
red and yellow spheres overlap, meaning we found the true topics. In this example,
we have used a very small vocabulary size, but in practice $V$ is much higher and
the cluster centroids are often on the boundary of the vocabulary simplex,
therefore we have to threshold the betas at 0.
Extending length until $R_k$ is our default choice for the extension parameters:
\begin{align}
m_k = \frac{R_k}{\|C- \mu_k\|_2}\text{ for } k=1,\ldots,K\label{eq4},
\end{align}
but we will see in our experiments that a careful tuning of the extension
parameters based on optimizing the geometric objective ~\eqref{geometric-min}
over a small range of
$m_k$ helps to improve the performance considerably. We call this
{\bf tGDM} algorithm (tuning details are presented in the \hyperref[supp]{Supplement}).
The connection between extension parameters and the thresholding is the following:
if the cluster centroid assigns probability to a word smaller than the whole data
does on average, this word will be excluded from topic $k$ with large enough $m_k$.
Therefore, the extension parameters can as well be used to control for the sparsity
of the inferred topics.
%\begin{wrapfigure}{l}{0.5\textwidth}
%  \begin{center}
%    \includegraphics[width=0.48\textwidth]{mds_3d.jpg}
%  \end{center}
%\caption{Visualization of GDM: Black, green, red and blue are cluster assignments; purple is the center, pink are cluster centroids, dark red are estimated topics and yellow are the true topics.}
%\label{fig:mds}
%\end{wrapfigure}
\subsection{Consistency of Geometric Dirichlet Means}
We shall present a theorem which provides a theoretical justification for the
Geometric Dirichlet Means algorithm. In particular, we will show
that the algorithm can achieve consistent estimates of the topic polytope,
under conditions on the parameters of the Dirichlet distribution of the topic
proportion vector $\theta_m$, along with conditions on the geometry of the convex polytope
$B$. The problem of estimating vertices of a convex polytope given data
drawn from the interior of the polytope has long been a subject of convex geometry ---
the usual setting in this literature is to assume the uniform distribution
for the data sample.
%It is shown recently that there is a polynomial time
%algorithm for simplex estimation based on Independent
%Component Analysis \citet{anderson2012efficient}.
Our setting is somewhat more general --- the distribution of the points
inside the polytope will be driven by a symmetric Dirichlet distribution
setting, i.e.,  $\theta_m \stackrel{iid}{\sim} \textrm{Dir}_{K}(\alpha)$.
(If $\alpha =1$ this results in the uniform distribution on $B$.)
Let $n=K-1$. Assume that the document multinomial parameters $p_1,\ldots,p_M$
(given in Eq.~\eqref{pmi}) are the actual data.
Now we formulate a geometric problem linking the population version of k-means and polytope estimation:
\begin{problem}\label{prob1}
Given a convex polytope $A\in\mathbb{R}^n$, a continuous probability density function
$f(x)$ supported by $A$, find a $K$-partition $A = \bigsqcup\limits_{k=1}^K A_k$
that minimizes:
$$\mathlarger{\sum}_{k}^{K} \int_{A_k}\|\mu_k-x\|^2_2 f(x) \mathop{dx},$$
where $\mu_k$ is the center of mass of $A_k$:
$\mu_k := \frac{1}{\int_{A_k}f(x)\mathop{dx}}\int_{A_k}xf(x)\mathop{dx}$.
\end{problem}
This problem is closely related to the Centroidal Voronoi Tessellations~\citep{du1999centroidal}.
This connection can be exploited to show that
\begin{lem}\label{lem1}
%If $A$ is a $K-1$ dimensional convex polytope and $f(x)$ is arbitrary density function
%supported by $A$, then
Problem \ref{prob1} has a unique global minimizer.
\end{lem}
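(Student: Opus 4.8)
The plan is to recognize Problem~\ref{prob1} as the population (continuous) version of $K$-means, i.e.\ an optimal quantization problem, and to analyze it through the two alternating optimality conditions that characterize Centroidal Voronoi Tessellations. First I would fix the partition $\{A_k\}$ and observe, via the parallel-axis identity
\[
\int_{A_k}\|c-x\|_2^2 f(x)\,dx \;=\; \int_{A_k}\|\mu_k-x\|_2^2 f(x)\,dx \;+\; \Big(\int_{A_k} f\Big)\,\|c-\mu_k\|_2^2,
\]
that among all centers $c$ the center of mass $\mu_k$ is the strict minimizer whenever $\int_{A_k} f>0$. Dually, fixing a set of centers $\{c_k\}$, the pointwise inequality $\sum_k \int_{A_k}\|x-c_k\|_2^2 f \ge \int_A \min_k\|x-c_k\|_2^2 f$ shows that the Voronoi partition induced by $\{c_k\}$ is optimal. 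Combining the two, any global minimizer of Problem~\ref{prob1} must be a CVT, and the partition problem is equivalent to minimizing the quantization energy
\[
F(c_1,\dots,c_K) \;=\; \int_A \min_{1\le k\le K}\|x-c_k\|_2^2\, f(x)\,dx
\]
over center locations $c_k$, with the partition recovered as the Voronoi cells.

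Existence of a minimizer is then routine: since moving any $c_k$ into the convex set $A$ can only decrease $F$, the centers may be restricted to the compact set $A^K$, on which $F$ is continuous, so the infimum is attained. Note that the objective is invariant under relabeling of the $K$ cells, so \emph{uniqueness} is understood up to this permutation symmetry, which I would quotient out from the start.

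The crux, and the step I expect to be the main obstacle, is uniqueness. Because $F$ is a pointwise minimum of the convex maps $c_k\mapsto\|x-c_k\|_2^2$, it is \emph{not} convex, so uniqueness cannot be read off from a direct convexity argument; and on a highly symmetric region a generic density can even admit competing optima, so the proof must genuinely use the structure at hand. The plan is to exploit that $A$ is a simplex and $f$ is the push-forward of a symmetric Dirichlet density. I would (i) show that each candidate CVT is a nondegenerate critical point with positive-definite second variation, adapting the uniqueness theory for optimal quantizers of log-concave densities, so that global minimizers are isolated, and (ii) use the transitive symmetry group of the simplex, together with the CVT characterization of~\citet{du1999centroidal}, to single out the symmetric tessellation as the unique minimizer up to relabeling. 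Turning the local nondegeneracy and symmetry considerations in (i)--(ii) into a genuinely \emph{global} uniqueness statement is the delicate part, and is where I would expect most of the technical effort to lie.
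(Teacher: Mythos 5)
Your setup matches the paper's opening moves exactly: reduce Problem~\ref{prob1} to a Centroidal Voronoi Tessellation problem (optimal centers must be centers of mass, optimal partitions must be the Voronoi regions of those centers), restate the problem as minimizing a quantization energy over center locations, and get existence from continuity and compactness. But that is only the preamble. The entire content of Lemma~\ref{lem1} is \emph{uniqueness}, and that is precisely the step you do not supply: you propose a two-part plan (positive-definite second variation at candidate CVTs, plus the symmetry group of the simplex) and then explicitly defer ``the delicate part'' of upgrading these local and symmetric considerations to global uniqueness. As written, your text is a proof of existence together with a research program for uniqueness, not a proof of the lemma.

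The route you sketch also does not fit the statement being proved. Lemma~\ref{lem1} asserts uniqueness for Problem~\ref{prob1} as stated, i.e.\ for an \emph{arbitrary} convex polytope $A$ and an \emph{arbitrary} continuous density $f$ supported on $A$; the equilateral simplex and the symmetric Dirichlet density are hypotheses of Lemma~\ref{lem2}, not of Lemma~\ref{lem1}, so an argument leaning on the transitive symmetry group of the simplex could at best cover that special case. The paper closes the uniqueness step quite differently: it invokes the criterion of \citet{du1999centroidal} that the global minimizer of the restated energy is unique whenever the distance function is strictly convex and the Voronoi regions are convex, and then verifies both conditions here --- squared Euclidean distance is strictly convex, and each Voronoi region is an intersection of half-spaces with the convex polytope $A$, hence convex. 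Your own observation that a highly symmetric region can admit competing optima is exactly the worry this citation is meant to dispose of; one may debate whether the cited criterion fully settles it, but a blind proof of the lemma must either establish such a criterion or substitute a complete argument, and your proposal does neither.
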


%% \begin{proof}
%% The proof follows from a sequence of results of \citet{du1999centroidal}, which we
%% now summarize.
%% First, if the $K$-partition $(A_1,\ldots,A_K)$ is a minimizer of Problem \ref{prob1}, then $A_k$s are
%% the Voronoi regions corresponding to the $\mu_k$s. Second,
%% Problem \ref{prob1} can be restated in terms of the $\mu_k$s to minimize
%% $\mathcal{K}(\mu_1,\ldots,\mu_K) = \mathlarger{\sum}_k \int_{\hat A_k}\|\mu_k-x\|^2_2 f(x) \mathop{dx}$,
%% where $\hat A_k$s are the Voronoi regions corresponding to their centers of mass $\mu_k$s.
%% Third, $\mathcal{K}(\mu_1,\ldots,\mu_K)$ is a continuous function and admits a global minimum.
%% Fourth, the global minimum is unique if the distance function in $\mathcal{K}$ is
%% strictly convex and the Voronoi regions are convex.
%% %
%% Now, it can be verified that the squared Euclidean distance is strictly convex. Moreover,
%% Voronoi regions are intersections of half-spaces with the convex polytope $A$, which
%% can also be represented as an intersection of half-spaces. Therefore,
%% the Voronoi regions of Problem \ref{prob1} are convex polytopes, and it follows that
%% the global minimizer is unique.
%% \end{proof}

In the following lemma, a median of a simplex is a line segment joining
a vertex of a simplex with the centroid of the opposite face.
\begin{lem}\label{lem2}
If $A\in \mathbb{R}^n$ is an equilateral simplex with
symmetric Dirichlet density $f$ parameterized by $\alpha$,
then the optimal centers of mass of the Problem \ref{prob1} lie on the corresponding medians of $A$.
\end{lem}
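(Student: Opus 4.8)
The plan is to exploit the high degree of symmetry of the configuration together with the uniqueness furnished by Lemma~\ref{lem1}. The equilateral simplex $A$ carries a group of isometries isomorphic to the symmetric group $S_K$: every permutation $\sigma$ of the $K$ vertices extends to an isometry $g_\sigma$ of $\mathbb{R}^n$ that fixes the centroid of $A$ and maps $A$ onto itself. The decisive fact is that the symmetric Dirichlet density $f$ is invariant under permutations of the barycentric coordinates, so $f\circ g_\sigma = f$ for every $\sigma$. Hence each $g_\sigma$ is a genuine symmetry of the whole optimization in Problem~\ref{prob1}: it preserves the domain $A$ and the weight $f$, and it carries any feasible $K$-partition to another feasible $K$-partition with the same objective value.

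First I would transfer this symmetry onto the optimizer. If $\{A_k\}_{k=1}^K$ is the optimal partition, then $\{g_\sigma(A_k)\}_{k=1}^K$ is again optimal for every $\sigma$; by the uniqueness in Lemma~\ref{lem1}, these two partitions must coincide as unordered collections. Therefore each $g_\sigma$ only permutes the regions $A_k$ among themselves, and, because the center of mass is equivariant under $f$-preserving isometries, it permutes the centers $\mu_k$ accordingly. This produces an action of $S_K$ on the $K$-point set $\{\mu_1,\dots,\mu_K\}$.

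Next I would set up the correspondence between centers and vertices and perform the key geometric identification. Labeling the centers so that the stabilizer of $\mu_k$ contains the subgroup $S_{K-1}$ of permutations fixing the vertex $v_k$, it remains to note that the set of points of $\mathbb{R}^n$ fixed by all of $S_{K-1}$ is exactly the median from $v_k$. Indeed, in barycentric coordinates a point fixed by every permutation of the vertices other than $v_k$ must have equal weights on those $K-1$ vertices, i.e. it is an affine combination of $v_k$ and the centroid $\tfrac{1}{K-1}\sum_{j\neq k}v_j$ of the opposite face --- which is precisely the median line. Since $\mu_k$ is fixed by all of $S_{K-1}$ and lies in the interior of $A$, it lies on the median segment from $v_k$.

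I expect the delicate point to be the region--vertex correspondence used above: a priori the induced $S_K$-action on the $K$ regions is only an abstract transitive action on $K$ elements, and one must rule out stabilizers that are not the standard point stabilizers $S_{K-1}$ before the fixed-set computation applies. I would close this either by a direct geometric argument showing that each optimal region abuts a unique vertex (fixing the labeling), or by observing that the global centroid $C$ is the \emph{only} $S_K$-fixed point while the $\mu_k$ are distinct from $C$; this forces the orbit of each $\mu_k$ to have full size $K$, so each stabilizer has order $(K-1)!=|S_{K-1}|$ and hence is a point stabilizer. Everything else is symmetry bookkeeping and the standard equivariance of the center of mass under isometries preserving $f$.
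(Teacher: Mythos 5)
Your strategy is genuinely different from the paper's proof, and its skeleton is sound --- arguably more rigorous than what the paper actually does. The paper argues concretely in the low-dimensional case: it notes that the center of mass of $A$ is its centroid, observes that in an equilateral triangle the three optimal centers of mass form a smaller equilateral triangle whose circumcenter and centroid coincide with those of $A$, concludes that the centers lie on the medians, and then asserts that ``the symmetry and the property of circumcenter coinciding with centroid carry over'' to the general $n$-dimensional equilateral simplex, citing an external reference. There the symmetric structure of the optimizer is asserted rather than derived, and the reliance on the uniqueness of Lemma \ref{lem1} is left implicit. Your route --- push the $S_K$-symmetry of the data of Problem \ref{prob1} through the uniqueness of Lemma \ref{lem1} to get an induced action on the optimal centers, then compute fixed-point sets of stabilizers in barycentric coordinates --- is dimension-free, makes the role of uniqueness explicit, and your identification of the fixed locus of a vertex stabilizer $S_{K-1}$ with the median line is exactly right.

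However, your proposed closure of the ``delicate point'' has genuine gaps. First, the inference ``$C$ is the only $S_K$-fixed point and $\mu_k \neq C$, hence the orbit of $\mu_k$ has full size $K$'' is a non sequitur: absence of fixed points only excludes orbits of size $1$, while orbits of intermediate size $2,\dots,K-1$ correspond to stabilizers of index strictly between $1$ and $K$ and must be excluded separately. The repair is to note that the fixed set of any subgroup $H \le S_K$ consists of the points whose barycentric coordinates are constant on the $H$-orbits of vertices, so any subgroup fixing a point other than $C$ must be intransitive on vertices, whereas every subgroup of index less than $K$ (namely $S_K$, $A_K$, and for $K=4$ the dihedral Sylow $2$-subgroups) is transitive. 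Second, ``stabilizer of order $(K-1)!$ hence a point stabilizer'' is false for $K=6$: the outer automorphism of $S_6$ produces transitive subgroups isomorphic to $S_5$ of index $6$; these too are transitive, so the same fixed-point observation rescues you, together with the fact that an intransitive subgroup of index exactly $K$ must be a point stabilizer (an intransitive $H$ lies in some $S_j \times S_{K-j}$, whose index $\binom{K}{j}$ is at least $K$ with equality only for $j \in \{1, K-1\}$). Third, the premise $\mu_k \neq C$ is asserted but never proved; it also follows from the same machinery: the $K$ optimal centers are pairwise distinct (coincident centers would have identical Voronoi regions, impossible for disjoint regions of positive mass), so at most one can sit at $C$, and if exactly one did, each of the remaining $K-1$ distinct centers would, by the analysis above, require an orbit of size $K$ inside a set of $K-1$ elements --- a contradiction. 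With these three repairs your argument is complete and, unlike the paper's, fully rigorous in every dimension.
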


%% \begin{proof}
%% Since $f$ is a symmetric Dirichlet density, the
%% center of mass of $A$ coincides with its centroid.
%% Let $n=3$. In an equilateral triangle, the centers of mass
%% $\mu_1,\mu_2,\mu_3$ form an equilateral triangle $C$.
%% An intersection point of the Voronoi regions $A_1,A_2,A_3$ is the circumcenter
%% and the centroid of $C$, which is also a circumcenter and
%% centroid of $A$. Therefore, $\mu_1,\mu_2,\mu_3$ are located on the
%% medians of $A$ with exact positions depending on the $\alpha$.
%% The symmetry and the property of circumcenter coinciding with centroid
%% carry over to the general $n$-dimensional equilateral simplex.
%% \end{proof}

Based upon these two lemmas,
consistency is established under two distinct asymptotic regimes.
\begin{thm}\label{th1} Let $B=\conv(\beta_1,\ldots,\beta_K)$ be the true convex
polytope from which the $M$-sample $p_1,\ldots,p_M \in \Delta^{V-1}$
are drawn via Eq.~\eqref{pmi}, where $\theta_{m} \stackrel{iid}{\sim}
\textrm{Dir}_K(\alpha)$ for $m=1,\ldots,M$.
\begin{itemize}
\item [(a)] If $B$ is also an equilateral simplex,
then topic estimates obtained by the GDM algorithm
using the extension parameters given in Eq. \eqref{eq4}
converge to the vertices of $B$ in probability, as $\alpha$ is fixed and $M\rightarrow \infty$.
\item [(b)] If $M$ is fixed, while $\alpha \rightarrow 0$ then the
topic estimates obtained by the GDM also converge to the vertices of
$B$ in probability.
\end{itemize}
\end{thm}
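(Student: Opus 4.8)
The plan is to treat the two regimes separately, in each case tracking the three objects the algorithm builds --- the center $C$, the weighted $k$-means centroids $\mu_k$, and the radii $R_k$ --- and showing that the extended points $\hat\beta_k = C + \frac{R_k}{\|\mu_k-C\|_2}(\mu_k - C)$ converge to the true vertices $\beta_k$. For part (a) the data $p_1,\ldots,p_M$ are i.i.d.\ from the density $f$ on $B$ obtained by pushing the symmetric Dirichlet through $\theta\mapsto\sum_k\theta_k\beta_k$. First I would apply the law of large numbers to get $C=\frac1M\sum_m p_m \xrightarrow{P} C^*$, the center of mass of $f$, which by symmetry of the equilateral simplex and the symmetric Dirichlet equals the centroid of $B$. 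The crux is to pass from the empirical weighted $k$-means output to the population minimizer of Problem~\ref{prob1}: a standard (Pollard-type) consistency argument, together with the uniqueness guaranteed by Lemma~\ref{lem1}, yields $\mu_k\xrightarrow{P}\mu_k^*$, the population centers of mass. By Lemma~\ref{lem2} each $\mu_k^*$ lies on a median of $B$, so $\mu_k^*-C^*$ is a positive multiple of $\beta_k-C^*$ and the limiting ray points exactly at a vertex. Finally I would show that $R_k=\max_{m\in\text{cluster }k}\|C-p_m\|_2 \xrightarrow{P} \sup_{x\in A_k^*}\|x-C^*\|_2$, and that strict positivity of $f$ makes the samples fill the cell $A_k^*$ densely, so this supremum is attained at $\beta_k$, giving $R_k\xrightarrow{P}\|\beta_k-C^*\|_2$. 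Combining the direction and length limits yields $\hat\beta_k\xrightarrow{P}\beta_k$, up to the relabeling of clusters that is fixed by symmetry.

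For part (b) the driving fact is that as $\alpha\to 0$ the density $\mathrm{Dir}_K(\alpha)$ concentrates all its mass at the vertices of $\Delta^{K-1}$, so each $\theta_m$ converges in probability to a standard basis vector and hence $p_m\xrightarrow{P}\beta_{j(m)}$ for a (roughly uniform) random index $j(m)$. With $M$ fixed the points thus collapse onto the vertex set; for $\alpha$ small the within-vertex spread is negligible next to the inter-vertex gaps, so the $k$-means step separates the points by vertex and $\mu_k\xrightarrow{P}\beta_k$. Because every point of cluster $k$ collapses to $\beta_k$, the radius satisfies $R_k\xrightarrow{P}\|\beta_k-C\|_2$ while $\|\mu_k-C\|_2\xrightarrow{P}\|\beta_k-C\|_2$, so the extension factor tends to $1$ and $\hat\beta_k\xrightarrow{P}\beta_k$; here $C$ is a convex combination of the occupied vertices and stays in the interior, so the ray is well defined. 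In both regimes one then checks that the thresholding in Algorithm~\ref{algo} is asymptotically inactive, since the limits are the vertices $\beta_k\in\Delta^{V-1}$ themselves.

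I expect two steps to demand the most care. The first is the $k$-means consistency in (a): upgrading convergence of the empirical objective to convergence of its argminimizers requires a uniform law of large numbers over configurations of $K$ centers, and it is precisely here that Lemma~\ref{lem1} is essential, ruling out oscillation of the empirical centroids among symmetric optima. The second is the convergence of the max-distance radius $R_k$ to the exact vertex distance: one must verify that the extreme sampled point approaches the vertex in the correct radial direction, so that scaling the ray to length $R_k$ lands on $\beta_k$ rather than on some other boundary point of the cell. Finally, for fixed $M$ in (b) there is the genuinely delicate combinatorial point that some vertex may attract no document; the clean statement really requires each of the $K$ vertices to be represented, and making this occupancy event precise (for instance by taking $M$ large enough that it holds with high probability) is where part (b) has to be handled with care.
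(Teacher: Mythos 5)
Your plan for part (a) is essentially the paper's own proof: the law of large numbers for the center $C$, Pollard's strong consistency of k-means combined with the uniqueness of the optimal partition from Lemma~\ref{lem1}, Lemma~\ref{lem2} to place the population centroids on the medians so that the rays from the centroid point exactly at the vertices, and a separate consistency argument for the radii. The paper handles the radius step by an $\epsilon$-ball argument: positivity and symmetry of the density near each vertex give a constant $b_\epsilon>0$ with $\P\bigl(|\hat R - R_0| > 2\epsilon\bigr) \le (1-Kb_\epsilon)^M \rightarrow 0$, where $R_0$ is the circumradius; your ``samples fill the cell densely'' argument is the same in substance. One small simplification you could make: the direction of the ray is fixed by $C$ and $\mu_k$ alone, so only the scalar $R_k$ matters, and your worry about the extreme sample approaching the vertex ``in the correct radial direction'' is moot. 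For part (b) you argue directly (points collapse onto the vertices, k-means separates them, the extension factor tends to $1$), while the paper argues by contradiction: compactness yields a convergent subsequence of centers, weak convergence of the law of $p_m$ to $\sum_k K^{-1}\delta_{\beta_k}$ makes the k-means objective tend to $\frac{M}{K}\sum_k \min_i \|\beta_k - \mu^\alpha_i\|_2^2$, and this is bounded away from zero unless the limit points are the vertices. These are the same idea in different packaging. Your closing caveat, however, is a genuine point that the paper glosses over: with $M$ fixed, the event that some vertex attracts no document has probability bounded away from zero as $\alpha \rightarrow 0$ (each $\theta_m$ collapses onto an independently, uniformly chosen vertex), and on that event neither your direct argument nor the paper's contradiction goes through --- indeed the paper's $\frac{M}{K}$ prefactor tacitly assumes every vertex receives its expected share of documents. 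So your proposal matches the paper where the paper is rigorous, and is more candid than the paper about the one place where part (b) remains a sketch.
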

\subsection{nGDM: nonparametric geometric inference of topics}
\label{nGDM}
In practice, the number of topics $K$ may be unknown, necessitating a nonparametric
probabilistic approach such as the well-known
Hierarchical Dirichlet Process (HDP) \citep{teh2006hierarchical}.
Our geometric approach can be easily extended to this situation.
The objective \eqref{geometric-min} is now given by
\begin{eqnarray}
\label{geom-hdp}
\min_{B} G(B) = \min_{B} \sum_{m=1}^M N_m\min\limits_{x\in B}\|x-\bar{w}_m\|_2^2 +
\lambda |B|,
\end{eqnarray}
where $|B|$ denotes the number of extreme points of convex polytope $B =
\conv(\beta_1,\ldots,\beta_K)$. Accordingly,
our nGDM algorithm now consists of two steps:
(i) solve a penalized and weighted $k$-means
clustering to obtain the cluster centroids (e.g. using DP-means \citep{kulis2012revisiting}); (ii) apply geometric correction for recovering the extreme points, which proceeds as before.
%Therefore, we are still looking for a polytope closest to the data, but with unknown
%number of extreme points, which is penalized with $\lambda$. This suggests a simple extension of out algorithm - nonparametric GDM (nGDM). We use DP-means (or other k-means modification that chooses number of clusters) to learn cluster centroids and then perform our geometric correction to obtain topics.
Our theoretical analysis can be also extended to this nonparametric framework.
We note that the penalty term is reminiscent of the DP-means algorithm of
\citet{kulis2012revisiting}, which was derived under a small-variance asymptotics regime. For the
HDP this corresponds to $\alpha \rightarrow 0$ --- the regime in part (b) of Theorem \ref{th1}.
This is an unrealistic assumption in practice. Our geometric correction
arguably enables the accounting of the non-vanishing variance in data.
We perform a simulation experiment for varying values of $\alpha$ and show that
nGDM outperforms the KL version of DP-means \citep{jiang2012small} in terms of perplexity. This result
is reported in the \hyperref[supp]{Supplement}.
%Moreover, the DP-means algorithm with the
%KL divergence replacing Euclidean distance has been applied to
%topic modeling by \citet{jiang2012small}, resulting in a hard HDP learning
%algorithm.

\section{Performance evaluation}
\label{pfm}
\paragraph{Simulation experiments}
We use the LDA model to simulate data and focus our attention on the perplexity of held-out data and minimum-matching Euclidean distance between the true and estimated topics \citep{tang2014understanding}. We explore settings with varying document lengths ($N_m$ increasing from 10 to 1400 - Fig. \ref{fig:mm}(a) and Fig. \ref{fig:pp}(a)), different number of documents ($M$ increasing from 100 to 7000 - Fig. \ref{fig:mm}(b) and Fig. \ref{fig:pp}(b)) and when lengths of documents are small, while number of documents is large ($N_m = 50$, $M$ ranging from 1000 to 15000 - Fig. \ref{fig:mm}(c) and Fig. \ref{fig:pp}(c)). This last setting is of particular interest, since it is the most challenging for our algorithm,
which in theory works well given long documents,
but this is not always the case in practice.
We compare two versions of the Geometric Dirichlet Means algorithm: with tuned extension parameters (tGDM) and the default one (GDM) (cf. Eq. \ref{eq4}) against the
{\bf variational EM} (VEM) algorithm~\citep{blei2003latent} (with tuned hyperparameters), {\bf collapsed Gibbs sampling}
\citep{griffiths2004finding} (with true data generating hyperparameters), and {\bf RecoverKL}
\citep{arora2012practical} and verify the theoretical upper bounds for
topic polytope estimation (i.e. either $(\log M/M)^{0.5}$ or $(\log N_m/N_m)^{0.5}$) -
cf. \citet{tang2014understanding} and \citet{nguyen2015posterior}.
We are also interested in estimating each document's topic
proportion via the projection technique.
RecoverKL produced only a topic matrix,
which is combined with our projection based estimates to compute the perplexity
(Fig. \ref{fig:pp}). Unless otherwise specified, we set
$\eta = 0.1$, $\alpha=0.1$, $V=1200$, $M=1000$, $K=5$; $N_m=1000$ for each $m$;
the number of held-out documents is 100; results are averaged over 5 repetitions. Since finding exact solution to the k-means objective is NP hard, we use the algorithm of \citet{hartigan1979algorithm} with 10 restarts and the k-means++ initialization.
%We used the "topicmodels" package in R \citep{grun2011top} to obtain estimates from VEM and Gibbs sampling and RecoverKL Python code made available by the authors.\footnote{\url{http://www.cs.nyu.edu/~halpern/files/anchor-word-recovery.zip}} Note that Gibbs sampling requires prior parameter specification and we use true values as those in the simulations. This gives Gibbs sampling an advantage, since these parameters are usually not known in practice.
Our results show that (i) Gibbs sampling and tGDM have the best and almost identical performance in terms of statistical estimation; (ii) RecoverKL and GDM are the fastest while
sharing comparable statistical accuracy; (iii) VEM is the worst in most scenarios
due to its instability (i.e. often producing poor topic estimates); (iv) short document lengths
(Fig. \ref{fig:mm}(c) and Fig. \ref{fig:pp}(c)) do not degrade performance of GDM,
(this appears to be an effect of the law of large numbers,
as the algorithm relies on the cluster means, which are obtained by averaging over a large
number of documents);
(v) our procedure for estimating document topic proportions results in a good
quality perplexity of the RecoverKL algorithm in all scenarios (Fig. \ref{fig:pp}) and could be potentially utilized by other algorithms. Additional simulation experiments are presented in the \hyperref[supp]{Supplement}, which considers settings with varying $N_m$,
$\alpha$ and the nonparametric extension.
\begin{figure*}[ht]
\vskip 0.2in
\begin{center}
\centerline{\includegraphics[width=\textwidth]{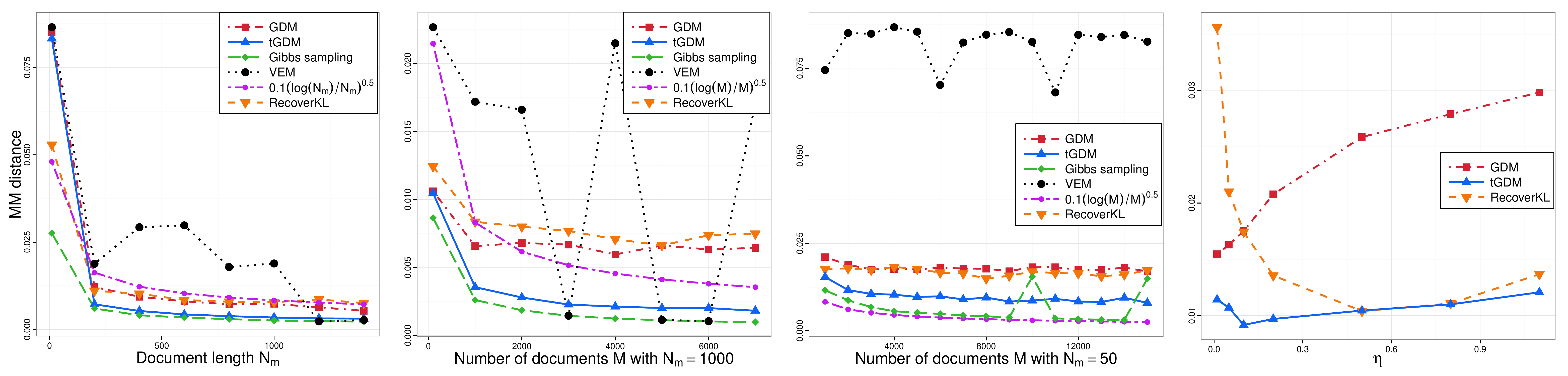}}
\caption{Minimum-matching Euclidean distance: increasing $N_m$, $M=1000$ (a); increasing $M$, $N_m=1000$ (b); increasing $M$, $N_m=50$ (c); increasing $\eta$, $N_m=50$, $M=5000$ (d).}
\label{fig:mm}
\end{center}
\vskip -0.2in
\end{figure*}
\begin{figure*}[ht]
\begin{center}
\centerline{\includegraphics[width=\textwidth]{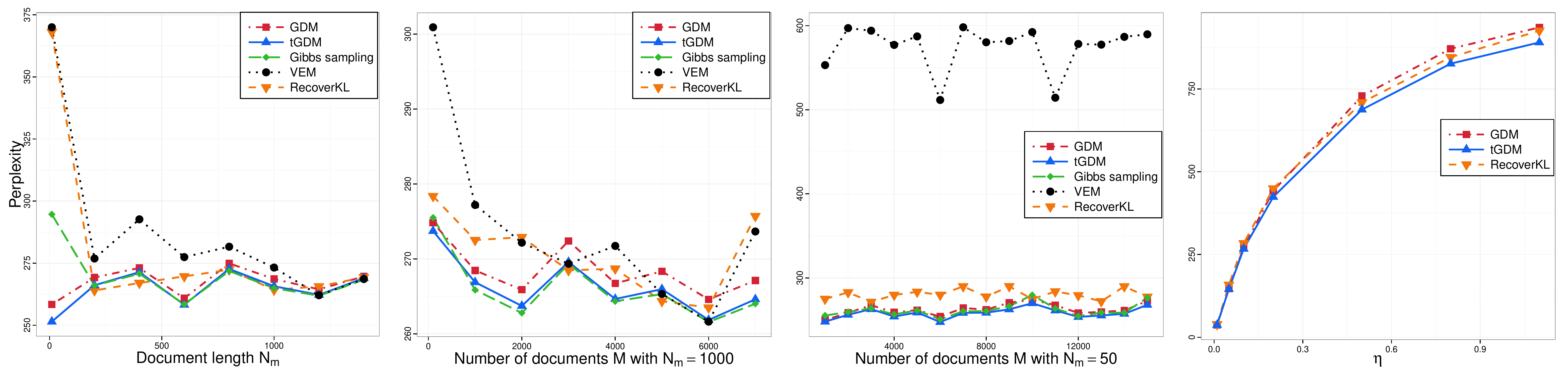}}
\caption{Perplexity of the held-out data: increasing $N_m$, $M=1000$ (a); increasing $M$, $N_m=1000$ (b); increasing $M$, $N_m=50$ (c); increasing $\eta$, $N_m=50$, $M=5000$ (d).}
\label{fig:pp}
\end{center}
\vskip -0.2in
\end{figure*}
\paragraph{Comparison to RecoverKL}
Both tGDM and RecoverKL exploit the geometry
of the model, but they rely on very different assumptions:
RecoverKL requires the presence of anchor words in the topics
and exploits this in a crucial way \citep{arora2012practical}; our method relies on
long documents in theory, even though the violation of this does not appear
to degrade its performance in practice, as we have shown earlier.
The comparisons are performed by varying the document length $N_m$,
and varying the Dirichlet parameter $\eta$
(recall that $\beta_k|\eta\,\thicksim\, \text{Dir}_V(\eta)$).
In terms of perplexity, RecoverKL, GDM and tGDM perform similarly
(see Fig.\ref{fig:ar}(c,d)), with a slight edge to tGDM. Pronounced
differences come in the quality of topic's word distribution estimates.
To give RecoverKL the advantage, we considered manually inserting anchor words for
each topic generated, while keeping the document length
short, $N_m = 50$ (Fig. \ref{fig:ar}(a,c)). We found that tGDM outperforms
RecoverKL when $\eta \leq 0.3$, an arguably more common setting,
while RecoverKL is more accurate when $\eta \geq 0.5$.
However, if the presence of anchor words is not explicitly enforced,
tGDM always outperforms RecoverKL in terms of topic distribution estimation accuracy
for all $\eta$ (Fig. \ref{fig:mm}(d)). The superiority of tGDM persists even as
$N_m$ varies from 50 to 10000 (Fig. \ref{fig:ar}(b)), while GDM is comparable
to RecoverKL in this setting.

\begin{figure*}[ht]
%\vskip 0.2in
\begin{center}
\centerline{\includegraphics[width=\textwidth]{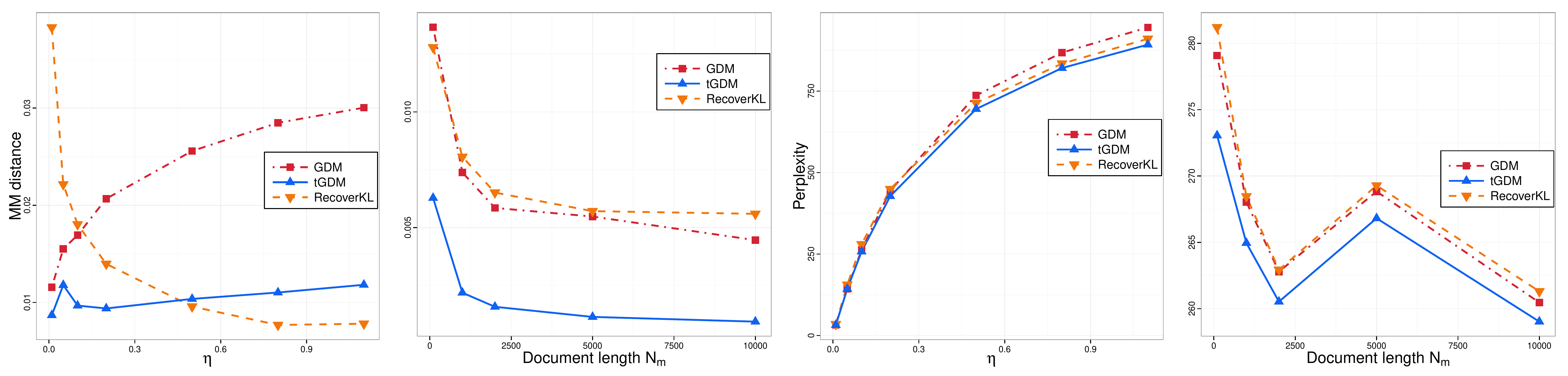}}
\caption{MM distance and Perplexity for varying $\eta$, $N_m=50$ with anchors (a,c);
varying $N_m$ (b,d).}
\label{fig:ar}
\end{center}
\vskip -0.2in
\end{figure*}
\paragraph{NIPS corpora analysis}
We proceed with the analysis of the NIPS corpus.\footnote{\url{https://archive.ics.uci.edu/ml/datasets/Bag+of+Words}}
After preprocessing, there are 1738 documents and 4188 unique words. Length of documents ranges from 39 to 1403 with mean of 272.
We consider $K=5, 10, 15, 20$, $\alpha=\frac{5}{K}$, $\eta=0.1$. For each value of $K$ we set aside 300 documents chosen at random to compute the perplexity and average results over 3 repetitions.
%We only consider the default extension parameters choice (Eq. \ref{eq4}), as tuning could be computationally expensive on big data sets.
Our results are compared against Gibbs sampling, Variational EM and RecoverKL (Table \ref{nips}). For $K=10$, GDM with 1500 k-means iterations and 5 restarts in R took 50sec; Gibbs sampling with 5000 iterations took 10.5min; VEM with 750 variational, 1500 EM iterations and 3 restarts took 25.2min; RecoverKL coded in Python took 1.1min. We note that with recent developments (e.g., \citep{hoffman2013stochastic}) VEM could be made faster, but its statistical accuracy remains poor.
Although RecoverKL is as fast as GDM, its perplexity performance
is poor and is getting worse with more topics, which we believe could be due to lack of anchor words in the data. We present topics found by Gibbs sampling, GDM and RecoverKL for
$K=10$ in the \hyperref[supp]{Supplement}.
%Our main goal is to design an algorithm that would be fast, easy and effective enough to be widely used in various topic modeling applications. Most statistical softwares have k-means package available and implementing our geometric modification of the cluster centroids for obtaining topics is very easy.
%\begin{comment}
%Gibbs sampling has the lowest perplexity at the cost of being the slowest. VEM is comparative in speed to DM, but its performance is significantly worse, while our algorithm is reasonably close to the Gibbs sampling results.
%\end{comment}
\begin{table}[ht]
\caption{Perplexities of the 4 topic modeling algorithms trained on the NIPS dataset.}
\label{nips}
%\vskip 0.15in
\centering
%\adjustbox{width=\textwidth}{
%\begin{small}
%\begin{sc}
\begin{tabular}{lcccc}
%\hline
%\abovespace\belowspace
    \toprule
    & GDM   & RecoverKL & VEM &  Gibbs sampling \\
%\hline
%\abovespace
\midrule
$K=5$    & 1269 &  1378 & 1980 &  1168 \\
$K=10$   & 1061 &  1235 & 1953 &  924  \\
$K=15$   & 957  &  1409 & 1545 &  802  \\
$K=20$   & 763  &  1586 & 1352 &  704  \\
\bottomrule
\end{tabular}
\end{table}

%\begin{comment}
%\begin{table}[t]
%  \caption{Sample table title}
%  \label{sample-table}
%  \centering
%  \begin{tabular}{lll}
%    \toprule
%    \multicolumn{2}{c}{Part}                   \\
%    \cmidrule{1-2}
%    Name     & Description     & Size ($\mu$m) \\
%    \midrule
%    Dendrite & Input terminal  & $\sim$100     \\
%    Axon     & Output terminal & $\sim$10      \\
%    Soma     & Cell body       & up to $10^6$  \\
%    \bottomrule
%  \end{tabular}
%\end{table}
%\end{comment}

\section{Discussion}
\label{dis}
We wish to highlight a conceptual aspect of GDM distinguishing it from
moment-based methods such as RecoverKL. GDM operates on the
document-to-document distance/similarity matrix, as opposed to the
second-order word-to-word matrix. So, from an optimization viewpoint,
our method can be viewed as the dual to RecoverKL method, which must
require anchor-word assumption to be computationally feasible and
theoretically justifiable. While the computational complexity of
RecoverKL grows with the vocabulary size and not the corpora size,
our convex geometric approach continues to be computationally feasible
when number of documents is large: since only documents near the
polytope boundary are relevant in the inference of the extreme
points, we can discard most documents residing near the polytope's center.
%We have proposed and analyzed Geometric Dirichlet Means algorithms
%for the learning and inference with the Latent Dirichlet Allocation model.
%Our algorithm operates with a geometric surrogate loss function to the
%LDA's likelihood, which also extends to the nonparametric (Hierarchical
%Dirichlet Process) setting.
%The algorithm is devised by exploiting the convex geometry of
%the underlying topic polytope, and is shown to be statistically
%consistent under some geometric conditions. In practice, it is
%shown to be very fast, while being comparable in statistical accuracy
%to that of a Gibbs sampler, which typically
%takes much longer to run.

We discuss some potential improvements and extensions next.
The tGDM algorithm showed a superior performance when
the extension parameters are optimized.
This procedure, while computationally effective relative to methods such
as Gibbs sampler, may still be not scalable to massive datasets. It seems
possible to reformulate the geometric objective as a function of
extension parameters, whose optimization can be performed more efficiently.
%Moreover, it is possible to improve the speed (e.g. it is straightforward to parallelize k-means) and accuracy of the k-means step by exploiting recent k-means developments.
In terms of theory, we would like to establish the error bounds by exploiting
the connection of topic inference to the geometric problem of
Centroidal Voronoi Tessellation of a convex polytope.

The geometric approach to topic modeling and inference
may lend itself naturally to other LDA extensions, as we have
demonstrated with nGDM algorithm for the HDP \citep{teh2006hierarchical}.
%Nevertheless, it is worth exploring other ways to find the topic polytope minimizing \ref{geom-hdp}, when number of extreme points in not known.
%\begin{comment}
%It turns out that our method can be quite easily extended to the nonparametric setting, i.e.,
%the hierarchical Dirichlet process \citep{teh2006hierarchical}, where the number of topics is unknown. In fact, the existing technical
%ingredients are already in place: we can start with a penalized geometric objective function to account for
%unknown number of topics --- this results in a weighted DP-means algorithm \citep{kulis2012revisiting}, augmented with our geometric correction technique. Our theoretical analysis can be also extended to this nonparametric framework. Small-variance asymptotics (SVA) has been applied to topic modeling by \citet{jiang2012small} resulting in a hard HDP. SVA setting corresponds to $\alpha \rightarrow 0$ - part (b) of Theorem \ref{th1}. In supplementary material we show that our geometric correction improves the resulting topics, while projection estimates for document topic proportions let us maintain the mixed membership structure.
%\end{comment}
Correlated topic models of \cite{blei2006correlated}
also fit naturally into the geometric framework --- we
would need to adjust geometric modification to capture logistic normal
distribution of topic proportions inside the topic polytope.
Another interesting direction is to consider
dynamic \citep{blei2006dynamic} (extreme points of topic polytope evolving
over time) and supervised \citep{mcauliffe2008supervised} settings.
Such settings appear relatively more challenging, but they are
worth pursuing further.

%(documents are labeled – and labels
%affect the k-means clustering step). Dominant admixtures \citep{bansal2014provable} are similar to a setting with small $\alpha$ and also fit naturally into the geometric perspective, potentially leading to a fast provable inference algorithm.
%\begin{comment}
%\textbf{To be added: GDM improvements(better extension parameter tuning; faster based on k-means developments; error bounds); Geometric treatment for other models(geometric correction improves DP-means results (plot for varying $\alpha$); mention supervised, correlated, dynamic topic models; dominant admixture model (as suggested by one of the reviewers).}
%There are several limitations in our approach. It is not expected to work well
%when the Dirichlet parameter $\alpha$ is large ($\alpha \gg 1$) \emph{and} the topic polytope is
%far from being symmetric. We do not yet have a theory for quantifying the estimation error
%of our algorithm under such situations. A better understanding can be achieved
%by exploiting the connection of topic inference to the geometric problem
%of Centroidal Voronoi Tessellation of a convex polytope. We will also look
%to adapt our algorithm to improve the Gibbs sampler itself --- this could be
%useful if one is interested in not only a point estimate of the polytope, but also
%the posterior distribution in a Bayesian setting. Finally, extending the
%geometric inference to a nonparametric setting when $K$ is unknown is also
%a future direction worth pursuing.
%\end{comment}
\subsubsection*{Acknowledgments}
This research is supported in part by grants NSF CAREER DMS-1351362 and NSF CNS-1409303.

\appendix
\section{Supplementary material}
\label{supp}
\subsection{Proof of Proposition \ref{prop1}}
\begin{proof}
Consider the KL divergence between two distributions
parameterized by $\bar w_m$ and $p_m$, respectively:
\begin{align*}
D&(P_{\bar{w}_m}\|P_{p_m}) = \sum_{i\in U_m} \bar{w}_{mi} \log{\frac{\bar{w}_{mi}}{p_{mi}}}\\
&=\frac{1}{N_m}\left(\sum_{i\in U_m} w_{mi} \log \bar{w}_{mi} - \sum_{i\in U_m} w_{mi} \log p_{mi}\right).
\end{align*}
Then $L(\W) - L(\t, \b) = \sum_m N_m D(P_{\bar{w}_m}\|P_{p_m}) \geq 0$, due to the non-negativity of
KL divergence. Now we shall appeal to a standard lower bound for the KL divergence
\citep{cover2012elements}:
\begin{align*}
D(P_{\bar{w}_m}\|P_{p_m}) %& \geq \frac{1}{2}\left(\sum_{i\in U_m}|\bar w_{mi} - p_{mi}|\right)^2 \\
& \geq \frac{1}{2}\sum_{i\in U_m}(\bar{w}_{mi} - p_{mi})^2,
\end{align*}
and an upper bound via $\chi^2$-distance (e.g. see \citet{sayyareh2011new}):
\begin{eqnarray*}
D(P_{\bar{w}_m}\|P_{p_m}) \leq
\sum_{i\in U_m}\frac{1}{p_{mi}}(\bar{w}_{mi} - p_{mi})^2.
\end{eqnarray*}
Taking summation of both bounds over $m=1,\ldots, M$ concludes the proof.
\end{proof}

\subsection{Connection between our geometric loss function and
other objectives which arise in subspace learning and
k-means clustering problems.}
Recall that our geometric objective (Eq. \ref{geometric-min}) is:
\begin{equation}
\min_{B} G(B) = \min_{B} \sum_{m=1}^M N_m\min\limits_{x:x\in B}\|x-\bar{w}_m\|_2^2.\nonumber
\end{equation}
We note that this optimization problem can be reduced to two other well-known problems when the objective
function and constraints are suitably relaxed/modified:
\begin{itemize}
\item A version of weighted low-rank matrix approximation is $\min\limits_{\rank(\hat D)\leq r} \tr((\hat D - D)^T Q (\hat D - D))$. If $Q=\diag(N_1,\ldots,N_M)$, $D=\W$, $r=K$ and $\hat D = \t\b$, the problem looks similar to the geometric objective without constraints and has a closed form solution \citep{manton2003geometry}: $\hat D = Q^{-1/2}U\Sigma_K V^T$, where
    \begin{align}
    Q^{1/2}D = U \Sigma_K V^T \label{eq1}
    \end{align}
    is the singular value decomposition and $\Sigma_K$ is the truncation to $K$ biggest singular values. Also note that here and further without loss of generality we assume $M \geq V$, if $M < V$ for the proofs to hold we replace $Q^{1/2}D$ with $(Q^{1/2}\W)^T$.
\item The k-means algorithm involves optimizing the objective
\citep{hartigan1979algorithm,lloyd1982least,macqueen1967some}:
$\min\limits_{x_1,\ldots,x_K}\sum_m \min\limits_{i\in\{1,\ldots,K\}}\|\bar{w}_m - x_i\|^2_2$.
Our geometric objective ~\eqref{geometric-min} is quite similar --- it replaces the second minimization with minimizing over
the convex hull of $\{x_1,\ldots,x_K\}$ and includes weight $N_m$s.
\item The two problems described above are connected in the following way
\citep{xu2003document}. Define the weighted k-means objective with
respect to cluster assignments: $\sum_k\sum_{m\in C_k}N_m\|\bar{w}_m-\mu_k\|^2$,
where $\mu_k$ is the centroid of the $k$-th cluster:
\begin{align}
\mu_k = \frac{\sum_{m \in C_k} N_m\bar{w}_m}{\sum_{m \in C_k} N_m}. \label{eq2}
\end{align}
Let $S_k$ be the optimal indicator vector of cluster $k$, i.e., $m$-th element is 1 if $m \in C_k$ and 0 otherwise. Define
\begin{align}
Y_k = \frac{Q^{1/2}S_k}{\|Q^{1/2}S_k\|^2_F}. \label{eq3}
\end{align}
If we relax the constraint on $S_k$ to allow any real values instead of only binary
values, then $Y$ can be solved via the following eigenproblem:
$Q^{1/2} \W \W^T Q^{1/2} Y = \lambda Y$.
\end{itemize}
Let us summarize the above observations by the following:
\begin{prop}\label{prop2}
Given the $M\times V$ normalized word counts matrix $\W$. Let $\mu_1,\ldots, \mu_K$ be
the optimal cluster centroids of the weighted k-means problem given by Eq.~\eqref{eq2},
and let $v_k$s be the columns of $V$ in the SVD of Eq.~\eqref{eq1}.
Then,
\[\spa(\mu_1,\ldots,\mu_K) = \spa(v_1,\ldots,v_K).\]
%For optimal cluster centroids of weighted k-means $\mu_1,\ldots,\mu_K$ (Eq.(\ref{eq2})), the centroid subspace $\spa(\mu_1,\ldots,\mu_K)$, is equal to the subspace of low rank matrix approximation $\spa(v_1,\ldots,v_K)$, where $v_k$'s are the columns of $V$ in the SVD of Eq.(\ref{eq1}).
\end{prop}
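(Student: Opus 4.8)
The plan is to reduce the combinatorial weighted $k$-means problem to the eigenproblem already derived in the bullet points preceding the statement, and then to transport the resulting subspace from the document side ($\mathbb{R}^M$) to the word side ($\mathbb{R}^V$) through the SVD of $Q^{1/2}\W$. First I would expand the weighted clustering objective using the fact that within each cluster $\bar w_m-\mu_k$ averages to zero, obtaining
\[
\sum_k\sum_{m\in C_k} N_m\|\bar w_m - \mu_k\|_2^2 = \sum_m N_m\|\bar w_m\|_2^2 - \sum_k \Big(\sum_{m\in C_k}N_m\Big)\|\mu_k\|_2^2 .
\]
Since the first term is constant, minimizing the objective is equivalent to maximizing $\sum_k (\sum_{m\in C_k}N_m)\|\mu_k\|_2^2$ over the cluster assignment.

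Next I would rewrite this quantity in trace form. Writing $\mu_k = \W^T Q S_k/(S_k^T Q S_k)$ and substituting the normalized indicators $Y_k = Q^{1/2}S_k/\|Q^{1/2}S_k\|_F$ from Eq.~\eqref{eq3}, a short computation gives $(\sum_{m\in C_k}N_m)\|\mu_k\|_2^2 = Y_k^T Q^{1/2}\W\W^T Q^{1/2} Y_k$, so that the objective becomes $\tr(Y^T Q^{1/2}\W\W^T Q^{1/2} Y)$ with $Y=[Y_1,\ldots,Y_K]$. Because the clusters are disjoint, the $Y_k$ have disjoint supports and hence $Y^T Y = I_K$. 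Relaxing the binary constraint on $S_k$ to real values, as in the displayed eigenproblem, and invoking the Ky Fan / Rayleigh--Ritz theorem, the maximizing $Y$ has columns spanning the top-$K$ eigenspace of $Q^{1/2}\W\W^T Q^{1/2}$, i.e. $\spa(Y_1,\ldots,Y_K) = \spa(u_1,\ldots,u_K)$, the leading left singular vectors of $Q^{1/2}\W$.

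Finally I would transport this identity to word space. From $\mu_k = \W^T Q S_k/(S_k^T Q S_k)$ and $Q S_k = \|Q^{1/2}S_k\|_F\, Q^{1/2} Y_k$ one gets $\mu_k \propto (Q^{1/2}\W)^T Y_k$, hence $\spa(\mu_1,\ldots,\mu_K) = (Q^{1/2}\W)^T\,\spa(Y_1,\ldots,Y_K) = (Q^{1/2}\W)^T\,\spa(u_1,\ldots,u_K)$. Using the SVD $Q^{1/2}\W = U\Sigma_K V^T$ of Eq.~\eqref{eq1} and the orthonormality of the columns of $U$, we have $(Q^{1/2}\W)^T u_k = V\Sigma_K U^T u_k = \sigma_k v_k$, so the image is exactly $\spa(v_1,\ldots,v_K)$, which yields the claimed equality.

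The main obstacle is the gap between the discrete $k$-means optimum and the continuous relaxation: the equality $\spa(Y_1,\ldots,Y_K)=\spa(u_1,\ldots,u_K)$ is exact only for the relaxed eigenproblem, whereas the optimal \emph{binary} indicators need not span the leading left singular subspace exactly. Making the statement rigorous therefore requires either reading the ``optimal centroids'' through the relaxed solution or adding hypotheses --- a strict eigengap $\sigma_K > \sigma_{K+1}$, nonempty clusters so that the $Y_k$ are independent, and $\sigma_k>0$ so that $(Q^{1/2}\W)^T$ is injective on $\spa(u_1,\ldots,u_K)$. Handling this discrete-to-continuous identification cleanly, and checking the dimension count so that both spans have full dimension $K$, is the delicate part; the algebraic reductions above are otherwise routine.
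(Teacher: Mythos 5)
Your proposal is correct and follows essentially the same route as the paper's own proof, which likewise writes the centroids as $\mu_k = \W^T Q^{1/2} Y_k$ and transports the span of the (relaxed) indicator vectors through the SVD of $Q^{1/2}\W$ to land on $\spa(v_1,\ldots,v_K)$. The discrete-to-continuous gap you flag is genuine but is equally present in the paper, whose proof simply asserts that the optimal binary $Y_k$'s \emph{are} eigenvectors of $Q^{1/2}\W\W^T Q^{1/2}$ --- an identification valid only for the relaxed eigenproblem --- so your rendering, with the explicit trace reformulation and Ky Fan step, is if anything the more careful version of the same argument.
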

\begin{proof}
Following \citet{ding2004k}, let $P_c$ be an operator projecting any vector onto $\spa(\mu_1,\ldots,\mu_K)$:
$P_c = \sum_k \mu_k \mu_k^T$.
Recall that $S_k$ is the indicator vector of cluster $k$ and $Y_k$ defined in Eq. (\ref{eq3}). Then
$\mu_k = \frac{\W^T Q S_k}{\|Q^{1/2}S_k\|^2_F} = \W^T Q^{1/2} Y_k$, and
$P_c = \sum_k \W^T Q^{1/2} Y_k (\W^T Q^{1/2} Y_k)^T$.
Now, note that $Y_k$'s are the eigenvectors of $Q^{1/2} \W \W^T Q^{1/2}$, which are also left-singular vectors of $Q^{1/2}\W = U \Sigma V^T$, so
\begin{align*}
P_c & = (Q^{1/2}\W)^T Y_k ((Q^{1/2}\W)^T Y_k)^T = \sum_k \lambda_k^2 v_k v_k^T,
\end{align*}
which is the projection operator for $\spa(v_1,\ldots,v_K)$. Hence, the two subspaces are equal.
\end{proof}
Prop.~\ref{prop2} and the preceding discussions motivate the GDM algorithm for
estimating the topic polytope: first, obtain an estimate of the underlying
subspace based on k-means clustering and then, estimate the vertices of the polytope
that lie on the subspace just obtained.

\subsection{Proofs of technical lemmas}
Recall Problem \ref{prob1} from the main part:
\begin{pmn}[\ref{prob1}]
Given a convex polytope $A\in\mathbb{R}^n$, a continuous probability density function
$f(x)$ supported by $A$, find a $K$-partition $A = \bigsqcup\limits_{k=1}^K A_k$
that minimizes:
$$\mathlarger{\sum}_{k}^{K} \int_{A_k}\|\mu_k-x\|^2_2 f(x) \mathop{dx},$$
where $\mu_k$ is the center of mass of $A_k$:
$\mu_k := \frac{1}{\int_{A_k}f(x)\mathop{dx}}\int_{A_k}xf(x)\mathop{dx}$.
\end{pmn}
\paragraph{Proof of Lemma \ref{lem1}}
\begin{proof}
The proof follows from a sequence of results of \citet{du1999centroidal}, which we
now summarize.
First, if the $K$-partition $(A_1,\ldots,A_K)$ is a minimizer of Problem \ref{prob1}, then $A_k$s are
the Voronoi regions corresponding to the $\mu_k$s. Second,
Problem \ref{prob1} can be restated in terms of the $\mu_k$s to minimize
$\mathcal{K}(\mu_1,\ldots,\mu_K) = \mathlarger{\sum}_k \int_{\hat A_k}\|\mu_k-x\|^2_2 f(x) \mathop{dx}$,
where $\hat A_k$s are the Voronoi regions corresponding to their centers of mass $\mu_k$s.
Third, $\mathcal{K}(\mu_1,\ldots,\mu_K)$ is a continuous function and admits a global minimum.
Fourth, the global minimum is unique if the distance function in $\mathcal{K}$ is
strictly convex and the Voronoi regions are convex.
Now, it can be verified that the squared Euclidean distance is strictly convex. Moreover,
Voronoi regions are intersections of half-spaces with the convex polytope $A$, which
can also be represented as an intersection of half-spaces. Therefore,
the Voronoi regions of Problem \ref{prob1} are convex polytopes, and it follows that
the global minimizer is unique.
\end{proof}

\paragraph{Proof of Lemma \ref{lem2}}
\begin{proof}
Since $f$ is a symmetric Dirichlet density, the
center of mass of $A$ coincides with its centroid.
Let $n=3$. In an equilateral triangle, the centers of mass
$\mu_1,\mu_2,\mu_3$ form an equilateral triangle $C$.
An intersection point of the Voronoi regions $A_1,A_2,A_3$ is the circumcenter
and the centroid of $C$, which is also a circumcenter and
centroid of $A$. Therefore, $\mu_1,\mu_2,\mu_3$ are located on the
medians of $A$ with exact positions depending on the $\alpha$.
The symmetry and the property of circumcenter coinciding with centroid
carry over to the general $n$-dimensional equilateral simplex \citep{westendorp2013circum}.
\end{proof}

\subsection{Proof of Theorem \ref{th1}}

\begin{proof}
For part (a), let $(\hat \mu_1,\ldots, \hat \mu_K)$ be the minimizer of
the k-means problem
$\min\limits_{\mu_1,\ldots,\mu_K}\sum_m \min\limits_{i\in\{1,\ldots,K\}}\|p_m - \mu_i\|^2_2.$
Let $\tilde \mu_1,\ldots, \tilde \mu_K$ be the centers of mass of the solution of Problem \ref{prob1}
applied to $B$ and the Dirichlet density.
By Lemma 1, these centers of mass are unique, as they correspond to the unique optimal
$K$-partition.  Accordingly, by the strong consistency of k-means clustering under
the uniqueness condition \citep{pollard1981strong}, as $M\rightarrow \infty$,
\[\textrm{Conv}(\hat \mu_1, \ldots, \hat \mu_K)\rightarrow \textrm{Conv}(\tilde \mu_1,\ldots, \tilde \mu_K) \text{ a.s.},\]
where the convergence is assessed in either Hausdorff or the minimum matching distance
for convex sets \citep{nguyen2015posterior}.
Note that $C = \frac{1}{M}\sum_m p_m$ is a strongly consistent estimate
of the centroid $C_0$ of $B$, by the strong law of large numbers.
Lemma 2 shows that $\tilde \mu_1,\ldots, \tilde \mu_K$ are
located on the corresponding medians. To complete the proof, it remains to
show that $\hat R := \max\limits_{1 \leq m \leq M} \| C- p_m\|_2$ is
a weakly consistent estimate of the circumradius $R_0$ of $B$. Indeed,
for a small $\epsilon > 0$
define the event $E^k_m = \{p_m \in B_{\epsilon}(\beta_k) \cap B\}$, where $B_\epsilon(\beta_k)$
is an $\epsilon$-ball centering at vertex $\beta_k$.
Since $B$ is equilateral and the density over it is symmetric and positive
everywhere in the domain, $\P(E^1_m)=\ldots=\P(E^K_m)=: b_\epsilon>0$.
Let $E_m = \bigcup\limits_k E^k_m$, then $\P(E_m) = b_\epsilon K$. We have
\begin{align*}
& \limsup_{M\rightarrow \infty}\P(|\hat R - R_0| > 2\epsilon)= \limsup_{M\rightarrow \infty}\P(\max\limits_{1 \leq m \leq M}\| C_0- p_m\|_2 <
R_0 - \epsilon) < \\
& < \limsup_{M\rightarrow \infty} \P(\bigcap\limits_{m=1}^M E_m^\complement) =
\limsup_{M\rightarrow \infty} (1-b_\epsilon K)^M  = 0.
\end{align*}
A similar argument allows us to establish that each $R_k$ is also
a weakly consistent estimate of $R_0$. This completes the proof of part (a).
For a proof sketch of part (b),
for each $\alpha>0$, let $(\mu^\alpha_1,\ldots,\mu^\alpha_K)$
denote the $K$ means obtained by the k-means clustering algorithm. It suffices
to show that these estimates converge to the vertices of $B$. Suppose this is not the
case, due to the compactness of $B$, there is a subsequence of the $K$ means, as
$\alpha \rightarrow 0$, that tends to $K$ limit points, some of which are not
the vertices of $B$. It is a standard fact of Dirichlet distributions that
as $\alpha \rightarrow 0$, the distribution of the $p_m$ converges weakly to
the discrete probability measure $\sum_{k=1}^{K}\frac{1}{K} \delta_{\beta_k}$.
So the k-means objective function tends to $\frac{M}{K}\sum_{k}\min_{i\in\{1,\ldots,K\}}
\|\beta_k-\mu^\alpha_i\|_2^2$,
which is strictly bounded away from 0, leading to a contradiction.
This concludes the proof.
\end{proof}

\subsection{Tuned GDM}
In this section we discuss details of the extension parameters tuning. Recall that GDM requires extension scalar parameters $m_1,\ldots,m_K$ as part of its input. Our default choice (Eq. \eqref{eq4}) is
\begin{align*}
m_k = \frac{R_k}{\|C- \mu_k\|_2}\text{ for } k=1,\ldots,K,
\end{align*}
where $R_k = \max\limits_{m \in C_k} \|C- \bar{w}_m\|_2$ and $C_k$ is the set of indices of documents belonging to cluster $k$. In some situations (e.g. outliers making extension parameters too big) tuning of the extension parameters can help to improve the performance, which we called \textbf{tGDM} algorithm. Recall the geometric objective \eqref{geometric-min} and let
\begin{equation}
G_k(B) := \sum_{m\in C_k} N_m\min\limits_{x:x\in B}\|x-\bar{w}_m\|_2^2,
\end{equation}
which is simply the geometric objective evaluated at the documents of cluster $k$. For each $k=1,\ldots,K$ we used line search procedure \citep{brent2013algorithms} optimization of $G_k(B)$ in an interval from 1 up to default $m_k$ as in \eqref{eq4}. Independent tuning for each $k$ gives an approximate solution, but helps to reduce the running time.

\subsection{Performance evaluation}
Here we present some additional simulation results and NIPS topics.
\paragraph{Nonparametric analysis with DP-means.}
Based on simulations we show how nGDM can be used when number of topics is unknown and compare it against DP-means utilizing KL divergence (KL DP-means) by \citet{jiang2012small}. We analyze settings with $\alpha$ ranging from 0.01 to 2. Recall that KL DP-means assumes $\alpha \rightarrow 0$. $V = 1200$, $M = 2000$, $N_m = 3000$, $\eta = 0.1$, true $K=15$. For each value of $\alpha$ average over 5 repetitions is recorded and we plot the perplexity of $100$ held-out documents. Fig. \ref{fig:sva} supports our argument - for small values of $\alpha$ both methods perform equivalently well (KL DP-means due to variance assumption being satisfied and nGDM due to part (b) of Theorem 1), but as $\alpha$ gets bigger, we see how our geometric correction leads to improved performance.
\begin{figure*}[ht]
\vskip 0.2in
\begin{center}
\centerline{\includegraphics[width=0.45\textwidth]{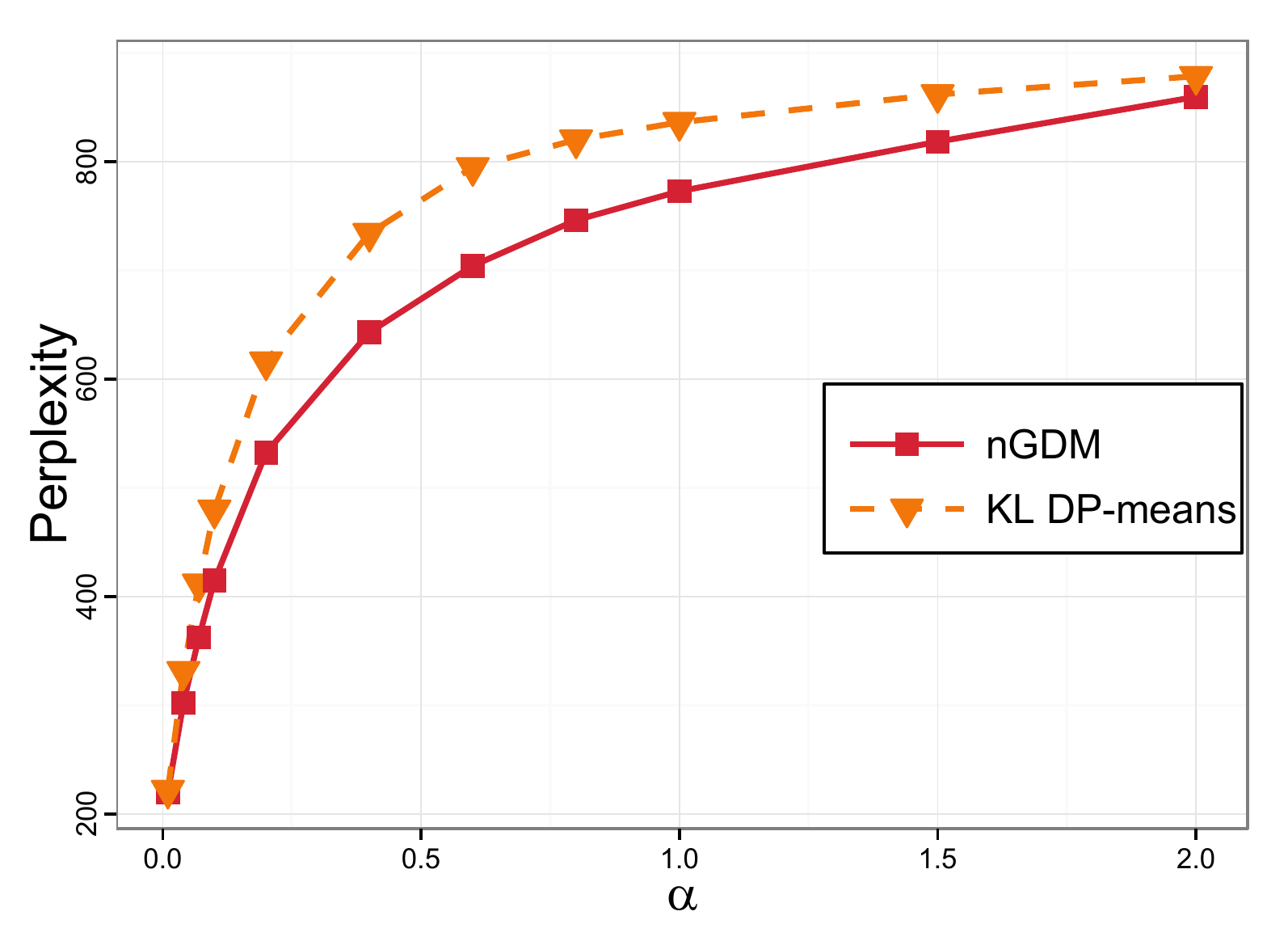}}
\caption{Perplexity for varying $\alpha$}
\label{fig:sva}
\end{center}
\vskip -0.2in
\end{figure*}

\paragraph{Documents of varying size.}
Until this point all documents are of the same length.
Next,  we evaluate the improvement of our method when document length varies.
The lengths are randomly sampled from 50 to 1500 and the experiment is repeated
20 times. The weighted GDM uses document lengths as weights for computing the data center and
training k-means.
In both performance measures (Fig. \ref{fig:3rd} left and center) the
weighted version consistently outperforms the unweighted one, while the
tuned weighted version stays very close to Gibbs sampling results.
%Intuitively this means that we want to account more for longer documents,
%as the MLE of them is less noisy.

\paragraph{Effect of the document topic proportions prior.}
Recall that topic proportions are sampled from the Dirichlet distribution
$\theta_m|\alpha\,\thicksim\,\text{Dir}_K(\alpha)$. We let $\alpha$ increase from 0.01 to 2.
Smaller $\alpha$ implies that samples are close to the extreme points, and hence GDM estimates topics better.
This also follows from Theorem 1(b) of the paper. We see (Fig. \ref{fig:3rd} right) that our solution and Gibbs sampling are almost identical for small $\alpha$, while VEM is unstable. With increased $\alpha$ Gibbs sampling remains the best, while our algorithm remains better than VEM. We also note that increasing $\alpha$ causes error of all
methods to increase.
\begin{figure*}[ht]
\vskip 0.2in
\begin{center}
\centerline{\includegraphics[width=0.8\textwidth]{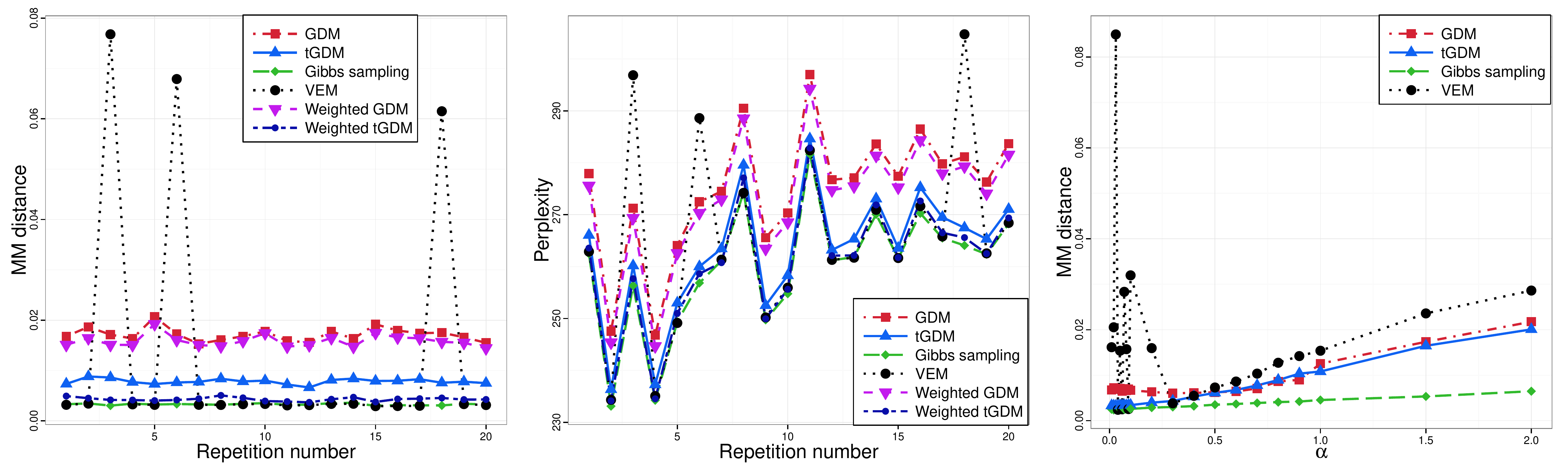}}
\caption{Minimum-matching Euclidean distance: varying $N_m$ (left); increasing $\alpha$ (right). Perplexity for varying $N_m$ (center).}
\label{fig:3rd}
\end{center}
\vskip -0.2in
\end{figure*}
\paragraph{Projection estimate analysis.}
Our objective function \eqref{geometric-min} motivates the estimation of document topic proportions by taking the barycentric coordinates of the projection of the normalized word counts of a document onto the topic polytope.
To do this we utilized the projection algorithm of \citet{golubitsky2012algorithm}.
%We already have estimates of topic polytope based on Gibbs sampling and VEM from our previous simulation studies. To estimate document topic proportions of the held-out data and compute the perplexity Gibbs sampling uses MCMC estimate for each document and VEM does Variational Inference iterations.
Note that some algorithms (RecoverKL in particular) do not have a built in method for finding topic proportions of the unseen documents. Our projection based estimate can solve this issue, as it can find topic proportions of a document only based on the topic polytope.
%Here we fix estimates of topics from some of our previous experiments found by Gibbs sampling and VEM methods and compare the perplexity of the corresponding held-out documents calculated based on the original methods versus projection estimates.
Fig. \ref{fig:proj} shows that perplexity with projection estimates closely follows corresponding results and outperforms VEM on the short documents (Fig. \ref{fig:proj} (right)).
%Note that for more accurate topic proportions estimation one could consider using weighted projection, as shown by Proposition \ref{prop1}.
%
\begin{figure*}[ht]
\vskip 0.2in
\begin{center}
\centerline{\includegraphics[width=0.8\textwidth]{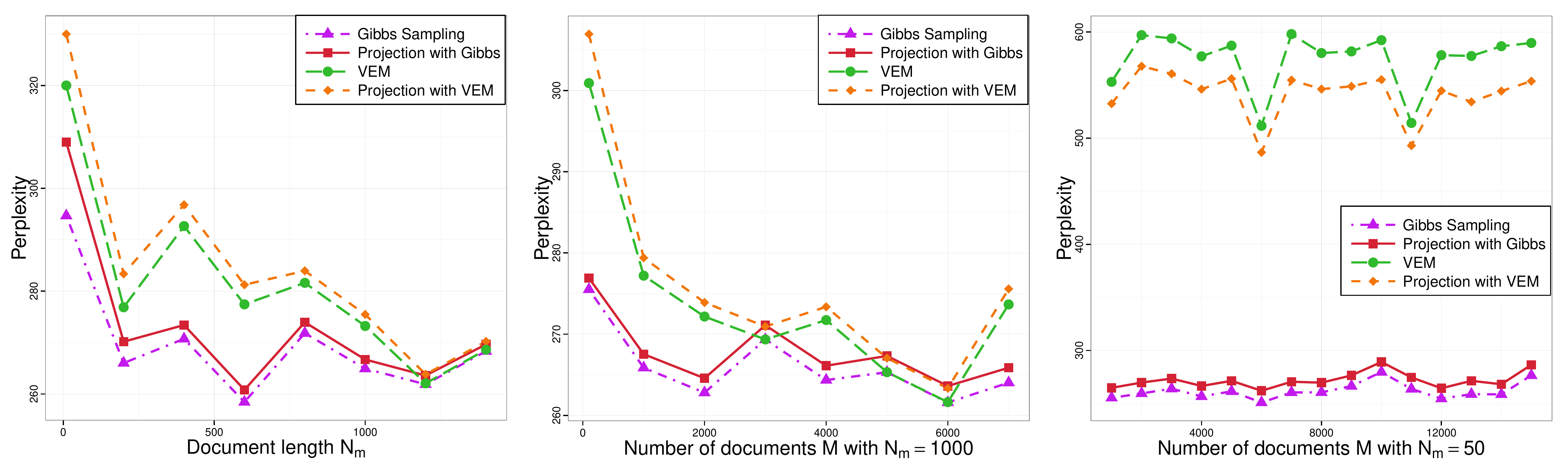}}
\caption{Projection method: increasing $N_m$, $M=1000$ (left); increasing $M$, $N_m=1000$ (center); increasing $M$, $N_m=50$ (right).}
\label{fig:proj}
\end{center}
\vskip -0.2in
\end{figure*}

\begin{table}[ht]
\scriptsize
\caption*{\textbf{Top 10 words (columns) of each of the 10 learned topics of NIPS dataset}}
\label{nips_gdm}
%\vskip 0.15in
\vskip 0.2in
\centering
\begin{tabular}{llllllllll}
      \toprule
  \multicolumn{10}{c}{GDM topics} \\
  \midrule
analog & regress. & reinforc. & nodes & speech & image & mixture & neurons & energy & rules \\
 circuit & kernel & policy & node & word & images & experts & neuron & characters & teacher \\
memory & bayesian & action & classifier & hmm & object & missing & cells & boltzmann & student \\
 chip & loss & controller & classifiers & markov & visual & mixtures & cell & character & fuzzy \\
 theorem & posterior & actions & tree & phonetic & objects\hspace{5pt} & expert & synaptic & hopfield & symbolic \\
 sources & theorem & qlearning & trees & speaker & face & gating & spike & temperature & saad \\
 polynom. & hyperp. & reward & bayes & acoustic & pixel & posterior & activity & annealing & membership \\
 separation & bounds & sutton & rbf & phoneme & pixels & tresp & firing & kanji & rulebased \\
 recurrent & monte & robot & theorem & hmms & texture & loglikel. & visual & adjoint & overlaps \\
 circuits & carlo & barto & boolean & hybrid & motion & ahmad & cortex & window & children \\
   \bottomrule
\end{tabular}
\vskip 0.2in
\begin{tabular}{llllllllll}
      \toprule
  \multicolumn{10}{c}{Gibbs sampler topics} \\
  \midrule
neurons & rules & mixture & reinforc. & memory & speech & image & analog & theorem & classifier \\
 cells & language & bayesian & policy & energy & word & images & circuit & regress. & nodes \\
 cell & recurrent & posterior & action & neurons & hmm & visual & chip & kernel & node \\
neuron & node & experts & robot & neuron & auditory & object\hspace{10pt} & voltage & loss & classifiers \\
  activity & tree & entropy & motor & capacity & sound & motion & neuron\hspace{10pt} & bounds & tree \\
  synaptic & memory & mixtures & actions & hopfield & phoneme & objects & vlsi & proof & clustering \\
  firing & nodes & markov & controller & associative & acoustic & spatial & circuits & polynom. & character \\
  spike & symbol & separation & trajectory & recurrent & hmms & face & digital & lemma & rbf \\
  stimulus & symbols & sources & arm & attractor & mlp & pixel & synapse & teacher & cluster \\
  cortex & grammar & principal & reward & boltzmann & segment. & pixels & gate & risk & characters \\
   \bottomrule
\end{tabular}
\vskip 0.2in
\begin{tabular}{llllllllll}
      \toprule
  \multicolumn{10}{c}{RecoverKL topics} \\
  \midrule
entropy & reinforc. & classifier & loss & ensemble & neurons & penalty & mixture & validation & image \\
image & controller & classifiers & theorem & energy & neuron & rules & missing & regress. & visual \\
 kernel & policy & speech & bounds & posterior & spike & regress. & recurrent & bayesian & motion \\
energy & action & nodes & proof & bayesian & synaptic & bayesian & bayesian & crossvalid. & cells \\
 ica & actions & word & lemma & speech & cells & energy & posterior & risk & neurons \\
images & memory & node & polynom. & boltzmann & firing & theorem & image & stopping & images \\
 separation & robot & image & neurons & student & cell & analog & markov & tangent & receptive \\
 clustering & trajectory & tree & regress. & face & activity & regulariz. & speech & image & circuit \\
sources & sutton & character & nodes & committee & synapses & recurrent & images & kernel & spatial \\
 mixture & feedback & memory & neuron & momentum & stimulus & perturb. & object & regulariz. & object \\
   \bottomrule
\end{tabular}
\end{table}

\clearpage
\small
\bibliography{reference}
\bibliographystyle{icml2016}

\end{document}